\documentclass{article}

% if you need to pass options to natbib, use, e.g.:
\PassOptionsToPackage{numbers}{natbib}
% before loading neurips_2021

% ready for submission
% \usepackage{neurips_2021}

% to compile a preprint version, e.g., for submission to arXiv, add add the
% [preprint] option:
% \usepackage[preprint]{neurips_2021}

% to compile a camera-ready version, add the [final] option, e.g.:
    \usepackage[final]{neurips_2021}

% to avoid loading the natbib package, add option nonatbib:
%    \usepackage[nonatbib]{neurips_2021}

\usepackage[utf8]{inputenc} % allow utf-8 input
\usepackage[T1]{fontenc}    % use 8-bit T1 fonts
\usepackage{hyperref}       % hyperlinks
\usepackage{url}            % simple URL typesetting
\usepackage{booktabs}       % professional-quality tables
\usepackage{amsfonts}       % blackboard math symbols
\usepackage{nicefrac}       % compact symbols for 1/2, etc.
\usepackage{microtype}      % microtypography
\usepackage{xcolor}         % colors

% Added by Tomas
\usepackage{bbm}
\usepackage{enumitem}
\usepackage{wrapfig}
\usepackage{amsthm}

\usepackage{amsthm}
\usepackage{graphicx}
\usepackage{wrapfig}

\usepackage{amssymb}

\usepackage{color}
\usepackage{mathtools}

\usepackage{multirow}

\usepackage[noend]{algpseudocode}
\usepackage{algorithm}

\usepackage[most]{tcolorbox}

\usepackage[scaled=0.84]{helvet}

\newcommand{\E}{\mathop{\mathbb{E}}} % behaves like a correct operator 
 % behaves like a correct operator 

\newcommand{\MD}{S}
\newcommand{\momcov}{\Sigma}

\newcommand{\T}{\mathcal{T}} % maybe we change this if we want
\newcommand{\unnorm}{\bar} % you can change this back if you want
\newcommand{\ELBO}{\mathrm{ELBO}} % you can change this back if you want

\newcommand{\UHA}{UHA}

\usepackage{amsthm}
\usepackage{thmtools} 
\usepackage{thm-restate}

% removing numerwithin
%\declaretheorem[name=Theorem,numberwithin=section]{thm}
%\declaretheorem[name=Theorem]{thm}
\declaretheorem{theorem}
\declaretheorem[sibling=theorem]{lemma}

% change [theorem] to [thm] below
%\newtheorem{thm}{Theorem}
%\newtheorem{corollary}[thm]{Corollary}
%\newtheorem{lemma}[thm]{Lemma}
%\newtheorem{proposition}[thm]{Proposition}
%\newtheorem{remark}[thm]{Remark}

% justin -- I couldn't take how ugly the citations looked
\hypersetup{%
    pdfborder = {0 0 0},
    colorlinks,
    citecolor=blue,
    linkcolor=blue,
}

\title{MCMC Variational Inference via Uncorrected Hamiltonian Annealing}

% other potential titles:
% Hamiltonian Variational Inference with Uncorrected Annealed Importance Sampling
% MCMC variational inference via uncorrected Hamiltonian annealing
% Tighter variational bounds via: ...
% Making the AIS likelihood bound differentiable
% Differentiable likelihood bounds using uncorrected annealed importance sampling
% MCMC variational inference via uncorrected Hamiltonian annealing
% MCMC variational inference with automatic reversals: Uncorrected Hamiltonian annealing
% MCMC-VI with automatic reversals: Uncorrected Hamiltonian annealing
% Hamiltonian variational annealing: Automatic reversals with uncorrected transitions
% Automatic reversals for variational Annealed Importance Sampling
% MCMC-VI with automatic reversals: Uncorrected Hamiltonian annealing
% MCMC-VI with automatic reversals: Uncorrected Hamiltonian AIS
% MCMC Variational Inference with automatic reversals: Uncorrected Hamiltonian AIS
% MCMC-VI with automatic reversals via uncorrected Hamiltonian annealing

% The \author macro works with any number of authors. There are two commands
% used to separate the names and addresses of multiple authors: \And and \AND.
%
% Using \And between authors leaves it to LaTeX to determine where to break the
% lines. Using \AND forces a line break at that point. So, if LaTeX puts 3 of 4
% authors names on the first line, and the last on the second line, try using
% \AND instead of \And before the third author name.

\author{
  Tomas Geffner \\
  College of Information and Computer Science\\
  University of Massachusetts, Amherst\\
  Amherst, MA \\
  \texttt{tgeffner@cs.umass.edu} \\
  % examples of more authors
  \And
  Justin Domke \\
  College of Information and Computer Science\\
  University of Massachusetts, Amherst\\
  Amherst, MA \\
  \texttt{domke@cs.umass.edu} \\
}

\begin{document}

\maketitle

\begin{abstract}
  Given an unnormalized target distribution we want to obtain approximate samples from it and a tight lower bound on its (log) normalization constant $\log Z$. Annealed Importance Sampling (AIS) with Hamiltonian MCMC is a powerful method that can be used to do this. Its main drawback is that it uses non-differentiable transition kernels, which makes tuning its many parameters hard. We propose a framework to use an AIS-like procedure with Uncorrected Hamiltonian MCMC, called Uncorrected Hamiltonian Annealing. Our method leads to tight and differentiable lower bounds on $\log Z$. We show empirically that our method yields better performances than other competing approaches, and that the ability to tune its parameters using reparameterization gradients may lead to large performance improvements.
\end{abstract}

%!TEX root = ../main.tex

\section{Introduction} \label{sec:intro}

Variational Inference (VI) \cite{blei2017variational, wainwright2008graphical, zhang2017advances} is a method to do approximate inference on a target distribution $p(z) = \unnorm p(z) / Z$ that is only known up to the normalization constant $Z$. The basic insights are, first, that the evidence lower bound (ELBO) $\mathbb{E}_{q(z)}[\log \unnorm p(z) - \log q(z)]$ lower-bounds $\log Z$ and, second, that maximizing the ELBO is equivalent to minimizing the KL-divergence from $q$ to $p$. The simplest VI method chooses a parameterized family for $q$ and optimizes its parameters to maximize the ELBO.

A recent direction involves combining VI with Markov chain Monte Carlo (MCMC) \cite{salimans2015markov, wolf2016variational}. These methods can be seen as an instance of the auxiliary VI framework \cite{agakov2004auxiliary} -- they create an augmented variational distribution that represents all intermediate random variables generated during the MCMC procedure. An augmented target distribution that attempts to capture the inverse MCMC dynamics is optimized jointly with this variational distribution. However, it has been observed that capturing inverse dynamics is challenging \cite[\S5.4]{wolf2016variational} (further discussion in Section \ref{sec:relwork}).

Annealed Importance Sampling (AIS) \cite{jarzynski1997equilibrium, neal2001ais} is a powerful technique used to build augmented distributions without the need of learning inverse dynamics. While it was originally proposed to estimate expectations using importance sampling, it can be easily used to build lower bounds on normalization constants of intractable densities \cite{grosse2015sandwiching, wu2016quantitative}. AIS creates a sequence of densities that bridge from a tractable initial approximation $q$ to the target $\unnorm p$. Then, the augmented variational distribution is given by a sequence of MCMC kernels targeting each bridging density, while the augmented target uses the {\em reversals} of those kernels. It turns out that the ratio of these augmented distributions can be computed using only evaluations of the bridging densities. Combining Hamiltonian MCMC kernels with AIS has been observed to produce strong lower bounds \cite{sohl2012hamiltonian, wu2016quantitative}.

However, these bounds are sensitive to numerous parameters, such as the initial distribution, bridging schedule, and parameters of the MCMC kernels. It would be desirable to optimize these parameters to tighten the bound. Unfortunately, the presence of Metropolis-Hastings acceptance steps means that the the final estimator is non-differentiable, and thus reparameterization gradients cannot be used.

In this work, we propose {\em Uncorrected} Hamiltonian Annealing (\UHA), a differentiable alternative to Hamiltonian AIS. We define an augmented variational distribution using Hamiltonian MCMC kernels, but dropping the accept-reject steps. This is motivated by the fact that Hamiltonian dynamics sometimes have high acceptance rates. Since these uncorrected MCMC kernels do not exactly hold the bridging densities invariant, an augmented target distribution cannot be defined in terms of reversals. Instead, we define our augmented target by deriving an algorithm for the exact reversal of the original (corrected) MCMC kernel and dropping the accept-reject step. Surprisingly, this yields a very simple expression for the resulting lower bound.
% Surprisingly, this yields a simple expression for the lower bound that only requires evaluating the momentum density at intermediate points in the procedure.

We use reparameterization gradients to tune various parameters involved in the lower bound produced by \UHA, including the initial approximation $q$, parameters of the uncorrected MCMC kernel, and the bridging densities. Experimentally, tuning all these leads to large gains. For example, in several inference tasks we observe that tuning \UHA\ with $K = 64$ bridging densities gives better results than traditional Hamiltonian AIS with $K = 512$.

Finally, we use \UHA\ to train VAEs \cite{vaes_welling, rezende2014stochastic}. In this case we observe that using \UHA\ leads to higher ELBOs. In addition, we observe that increasing the number of bridging densities with \UHA\ consistently leads to better results, and that for a large enough number of bridging densities the variational gap (difference between ELBO and true log-likelihood) becomes small, and models with higher log-likelihood are obtained.

%!TEX root = ../main.tex

% \vspace{-0.2cm}
\section{Preliminaries} \label{sec:preliminaries}
% \vspace{-0.2cm}

\textbf{Variational inference and augmentation.} Suppose that $p(z) = \frac{1}{Z}\unnorm p(z)$ is some target density, where $\unnorm p$ is unnormalized and $Z=\int \unnorm p(z) dz$ is the corresponding normalizer, and let
\begin{equation}
\ELBO(q(z), \unnorm p(z)) = \E_{q(z)} \log \frac{\unnorm p(z)}{q(z)}
\end{equation}
be the "ELBO operator". Variational inference (VI) is based on the fact that for any $q(z)$ we have \cite{blei2017variational}
\begin{equation}
\log Z = \ELBO(q(z), \unnorm p(z)) + \mathrm{KL}(q(z) \Vert p(z)).
\end{equation}
In VI, the parameters of $q$ are tuned to maximize the "evidence lower bound" (ELBO). Since the KL-divergence is non-negative, this is always a lower bound on $\log Z$. Also, maximizing the ELBO is equivalent to minimizing the KL-divergence from $q$ to $p$.

To get tighter bounds and better approximations recent work has made use of {\em augmented} distributions \cite{agakov2004auxiliary, huang2018improving}. Let $z_{1:M}=(z_1, \cdots, z_M)$
%be a set of "replicates" \tg{huh?} of $z$ 
and suppose that $\unnorm p(z_{1:M}) = \unnorm p(z_M) p(z_{1:M-1}\vert z_M)$ augments the original target density while preserving its normalization constant. Then, for any $q(z_{1:M})$ we have 
% \begin{equation}
% \log Z = \E_{q(z_{1:M})} \log \frac{\unnorm p(z_{1:M})}{q(z_{1:M})} + \mathrm{KL}(q(z_{1:M}) \Vert p(z_{1:M})). \label{eq:augELBO}
% \end{equation}
\begin{equation}
\log Z = \ELBO(q(z_{1:M}), \unnorm p(z_{1:M})) + \mathrm{KL}(q(z_{1:M}) \Vert p(z_{1:M})). \label{eq:augELBO}
\end{equation}
The first term is called the "augmented" ELBO and again lower bounds $\log Z$. By the chain rule of KL-divergence \cite{cover1999elements}, the KL-divergence from $q$ to $p$ over $z_{1:M}$ upper-bounds the KL-divergence over $z_M$. This justifies using the marginal of $q$ over $z_M$ to approximate the original target distribution.

% with respect to $w$. A typical choice for the estimator is $R_w(z) = \unnorm p(z) / q_w(z)$. While eq.~\ref{eq:boundVI} is quite simple, it represents the foundation of most VI methods. It is used when the variational family is defined as with Normalizing Flows \cite{tabak2013family, rezende2015variational, dinh2016density}, and when more powerful unbiased estimators $R_w$ are used \cite{agakov2004auxiliary, salimans2015markov, wolf2016variational, caterini2018hamiltonian, IWVAE, domke2018importance, domke2019divide}.

% \jd{I changed the order of this section to first introduce T and U and only then introduce the bridging densities. What do you think?}
\textbf{Annealed Importance Sampling.} A successful approach for creating augmented distributions is Annealed Importance Sampling (AIS) \cite{neal2001ais}. It creates an augmented proposal distribution $q$ by applying a sequence of transition densities $T_m(z_{m+1} \vert z_m)$, and an augmented target by defining transition densities $U_m(z_m \vert z_{m+1})$. This gives the augmented densities
\begin{equation}
q(z_{1:M}) = q(z_1) \prod_{m=1}^{M-1} T_m(z_{m+1} \vert z_m) \quad \mbox{ and } \quad \unnorm p(z_{1:M}) = \unnorm p(z_M) \prod_{m=1}^{M-1} U_m(z_m \vert z_{m + 1}).
\end{equation}
Naively, the ratio of these densities is
\begin{equation}
\frac{\unnorm p(z_{1:M})}{q(z_{1:M})} = \frac{\unnorm p(z_M)}{q(z_1)} \prod_{m=1}^{M-1} \frac{U_m(z_m\vert z_{m+1})}{T_m(z_{m+1}\vert z_m)}. \label{eq:AISnormal}
\end{equation}
To define the transitions $T_m$ and $U_m$, AIS creates a sequence of unnormalized densities $\unnorm \pi_1, \hdots, \unnorm \pi_{M-1}$ that ``bridge'' from a starting distribution $q$ to the target $\unnorm p$, meaning that $\unnorm \pi_1$ is close to $q$ and $\unnorm \pi_{M-1}$ is close to $\unnorm p$. %For each intermediate distribution, $T_m(z_{m+1} \vert z_m)$ is chosen to be a transition density that holds $\pi_m$ invariant, typically a Markov kernel. In general, such transition densities are often intractable to evaluate, due to the difficulty of evaluating the total probability that a Metropolis-Hastings method will not accept a move. AIS solves this problem by choosing $U_m$ as the reversal of $T_m$ with respect to $\pi_m$, defined as
Then, for each intermediate distribution, $T_m(z_{m+1} \vert z_m)$ is chosen to be a Markov kernel that holds $\pi_m$ invariant, and $U_m$ to be the reversal of $T_m$ with respect to $\pi_m$, defined as
\begin{equation}
U_m(z_m \vert z_{m+1}) = T(z_{m+1} \vert z_m) \frac{\pi_m (z_m)}{\pi_m(z_{m+1})}.
% U_m(z_m \vert z_{m+1}) = T(z_{m+1} \vert z_m) \nicefrac{\pi_m (z_m)}{\pi_m(z_{m+1})}.
\end{equation}
This choice produces a simplification so that eq.~\ref{eq:AISnormal} becomes
\begin{equation}
\frac{\unnorm p(z_{1:M})}{q(z_{1:M})} = \frac{\unnorm p(z_M)}{q(z_1)} \prod_{m=1}^{M-1} \frac{\unnorm \pi_m(z_m)}{\unnorm \pi_{m}(z_{m+1})}. \label{eq:AISest}
\end{equation}
% This can be evaluated without needing to evaluate the transition densities. Research has shown that the AIS augmentation may lead to extremely tight lower bounds $\ELBO(q(z_{1:M}), \unnorm p(z_{1:M}))$ \cite{grosse2015sandwiching, sohl2012hamiltonian, wu2016quantitative}.
This can be easily evaluated without needing to evaluate the transition densities. The ratio from eq.~\ref{eq:AISest} can be used to get an expression for the lower bound $\ELBO(q(z_{1:M}), \unnorm p(z_{1:M}))$. Research has shown that the AIS augmentation may lead to extremely tight lower bounds \cite{grosse2015sandwiching, grosse2016measuring, sohl2012hamiltonian, wu2016quantitative}.

\textbf{Hamiltonian Dynamics.} Many MCMC methods used to sample from $p(z)$ are based on Hamiltonian dynamics \cite{betancourt2017geometric, chen2014stochastic, neal2011mcmc, welling2011bayesian}. The idea is to create an augmented distribution $p(z, \rho) = p(z) \MD(\rho)$, where $\MD(\rho)$ is a distribution over a momentum variable $\rho$ (e.g. a Multivariate Gaussian). Then, one can define numerical integration schemes where $z$ and $\rho$ evolve while nearly holding $p(z, \rho)$ constant. When corrected by a Metropolis-Hastings acceptance step, this can be made to exactly hold $p(z, \rho)$ invariant. This is alternated with a scheme that resamples the momentum $\rho$ while holding $\MD(\rho)$ invariant. When Hamiltonian dynamics work well, $z$ can quickly move around, suppressing random-walk behavior.

There are a variety of different Hamiltonian MCMC methods, corresponding to different integration schemes, momentum distributions, and ways of resampling the momentum. For instance, HMC and Langevin dynamics use the leapfrog integrator, a Gaussian for the momentum variables and a full resampling of the momentum variables at each step \cite{neal2011mcmc, welling2011bayesian}. On the other hand, if the momentum variables are only partially resampled, the under-damped variants of HMC and Langevin dynamics are recovered \cite{neal2011mcmc}. It was observed that partial resampling may lead to improved perfomance \cite{cheng2018underdamped}.

It is easy to integrate Hamiltonian dynamics into AIS. First, define an augmented target $\unnorm p(z,\rho)=\unnorm p(z) \MD(\rho)$ and an augmented starting distribution $q(z,\rho)=q(z)\MD(\rho)$. Then, create a series of augmented densities $\unnorm \pi_1(z, \rho), \hdots, \unnorm \pi_{M-1}(z, \rho)$ bridging the two as $\unnorm \pi_m(z,\rho) = \unnorm \pi_m(z) S(\rho)$. Finally, define the forward transition $T_m(z_{m+1}, \rho_{m+1} \vert z_m, \rho_m)$ to be an iteration of a Hamiltonian MCMC method that leaves $\pi_m(z, \rho)$ invariant. We will describe a single transition $T_m$ as a sequence of three steps: (1) resample the momentum; (2) simulate Hamiltonian dynamics and apply an accept-reject step; and (3) negate the momentum. The precise process that defines the transition is shown in Alg.~\ref{alg:correctedtm}. Note that this algorithm is quite general, and compatible with HMC, Langevin dynamics and their underdamped variants (by selecting an appropriate integrator and resampling method).

\begin{algorithm}[ht]
\caption{Corrected $T_m(z_{m+1}, \rho_{m+1} \vert z_m, \rho_m )$}
\label{alg:correctedtm}
\begin{algorithmic}
\State 1. Sample $\rho'_m$ from some $s(\rho'_m\vert \rho_m)$ that leaves $\MD(\rho)$ invariant. Set $z'_m \leftarrow z_m$.
% \State \hspace{0.3cm} Set $z'_m \leftarrow z_m$.
\vspace{0.08cm}
\State 2. Simulate Hamiltonian dynamics as \small$(z''_m, \rho''_m) \leftarrow \T_m(z'_m, \rho'_m)$\normalsize.
\State \hspace{0.3cm} Calculate an acceptance probability \small$\alpha = \min\left(1, \unnorm \pi_m(z''_m, \rho''_m) / \unnorm \pi_m(z'_m, \rho'_m)\right)$\normalsize.
\State \hspace{0.3cm} With probability $\alpha$, set \small$(z'''_m,\rho'''_m) \leftarrow (z''_m,\rho''_m)$\normalsize. Otherwise, set \small$(z'''_m, \rho'''_m) \leftarrow (z'_m, \rho'_m)$\normalsize.
\vspace{0.08cm}
\State 3. Reverse the momentum as \small$(z_{m+1}, \rho_{m+1}) \leftarrow (z'''_m, -\rho'''_m)$\normalsize.
\State \Return $(z_{m+1}, \rho_{m+1})$
\end{algorithmic}
\end{algorithm}

Representing $T_m$ this way makes it easy to show it holds the density $\pi_m(z,\rho)$ invariant. The overall strategy is to show that each of the steps 1-3 holds $\pi_m$ invariant, and so does the composition of them \cite[\S3.2]{neal2011mcmc}. For steps 1 and 3 this is trivial, provided that $\MD(\rho)=\MD(-\rho)$. For step 2, we require that the simulation $\T_m$ has unit Jacobian and satisfies $\T_m^{-1} = \T_m$. Then, $\T_m$ can be interpreted as a symmetric Metropolis-Hastings proposal, meaning the Metroplis-Hastings acceptance probability $\alpha$ is as given. A typical choice for $\T_m$ that satisfies these requirements is the leapfrog integrator with a momentum reversal at the end. (This reversal then gets "un-reversed" in step 3 for accepted moves.)

% Representing $T_m$ this way makes it easy to show it holds the target $\pi_m(z,\rho)$ invariant. The overall strategy is to show that each of the steps 1-3 holds $\pi_m$ invariant, and so does the composition of them \cite[\S3.2]{neal2011mcmc}. The only requirements are a symmetric momentum distribution ($\MD(\rho)=\MD(-\rho)$) and a dynamics simulator $\T_m$ with unit Jacobian and self-inverting ($\T_m^{-1} = \T_m$). A typical choice for $\T_m$ that satisfies these requirements is the leapfrog integrator with a momentum reversal at the end. (This reversal then gets "un-reversed" in step 3 for accepted moves.)

Since $T_m$ holds $\pi_m$ invariant, we can define $U_m$ as the reversal of $T_m$ wrt $\pi_m$. Then, eq.~\ref{eq:AISest} becomes
\begin{equation}
\frac{\unnorm p(z_{1:M}, \rho_{1:M})}{q(z_{1:M}, \rho_{1:M})} = \frac{\unnorm p(z_M, \rho_M)}{q(z_1, \rho_1)} \prod_{m=1}^{M-1} \frac{\unnorm \pi_m(z_m, \rho_m)}{\unnorm \pi_m(z_{m+1}, \rho_{m+1})}. \label{eq:haisest}
\end{equation}
Using this ratio we get an expression for the lower bound $\ELBO(q(z_{1:M}, \rho_{1:M}), \unnorm p(z_{1:M}, \rho_{1:M}))$ obtained with Hamiltonian AIS. While this method has been observed to yield strong lower bounds on $\log Z$ \cite{sohl2012hamiltonian, wu2016quantitative} (see also Section \ref{sec:exps_real}), its performance depends on many parameters: initial distribution $q(z)$, momentum distribution $S$, momentum resampling scheme, simulator $\mathcal{T}_m$, and bridging densities. We would like to tune these parameters by maximizing the ELBO using reparameterization-based estimators. However, due to the accept-reject step required by the Hamiltonian MCMC transition, the resulting bound is not differentiable, and thus reparameterization gradients are not available.

%!TEX root = ../main.tex

% \vspace{-0.2cm}
\section{Uncorrected Hamiltonian Annealing} \label{sec:uhais}
% \vspace{-0.2cm}

The contribution of this paper is the development of {\em uncorrected} Hamiltonian Annealing (\UHA). This method is similar to Hamiltonian AIS (eq. \ref{eq:haisest}), but yields a differentiable lower bound. The main idea is simple. For any transitions $T_m$ and $U_m$, by the same logic as in eq.~\ref{eq:AISnormal}, we can define the ratio
\begin{equation}
\frac{\unnorm p(z_{1:M}, \rho_{1:M})}{q(z_{1:M}, \rho_{1:M})} = \frac{\unnorm p(z_M, \rho_M)}{q(z_1, \rho_1)} \prod_{m=1}^{M-1} \frac{U_m(z_m, \rho_m\vert z_{m+1}, \rho_{m+1})}{T_m(z_{m+1}, \rho_{m+1} \vert z_m, \rho_m)}.
\label{eq:haisobv}
\end{equation}
Hamiltonian AIS defines $T_m$ as a Hamiltonian MCMC kernel that holds $\pi_m$ invariant, and $U_m$ as the reversal of $T_m$ with respect to $\pi_m$. While this leads to a nice simplification, there is no {\em requirement} that these choices be made. We can use {\em any} transitions as long as the ratio $U_m / T_m$ is tractable.

We propose to use the "uncorrected" versions of the transitions $T_m$ and $U_m$ used by Hamiltonian AIS, obtained by dropping the accept-reject steps. To get an expression for the uncorrected $U_m$ we first derive the reversal $U_m$ used by Hamiltonian AIS (Alg.~\ref{alg:correctedum}). These uncorrected transitions are no longer reversible with respect to the bridging densities $\pi_m(z, \rho)$, and thus we cannot use the simplification used by AIS to get eq.~\ref{eq:haisest}. Despite this, we show that the ratio $U_m / T_m$ for the uncorrected transitions can still be easily computed (Thm. \ref{thm:ratioUT}). This produces a differentiable estimator, meaning the parameters can be tuned by stochastic gradient methods designed to maximize the ELBO.

We start by deriving the process that defines the transition $U_m$ used by Hamiltonian AIS. This is shown in Alg.~\ref{alg:correctedum}. It can be observed that $U_m$ follows the same three steps of $T_m$ (resample momentum, Hamiltonian simulation with accept-reject, momentum negation), but in reverse order.

\begin{algorithm}[ht]
\caption{Corrected $U_m(z_m, \rho_m \vert z_{m+1}, \rho_{m+1})$}
\label{alg:correctedum}
\begin{algorithmic}
\State 1. Set \small$(z'''_m, \rho'''_m) \leftarrow (z_{m+1}, -\rho_{m+1})$\normalsize.
\vspace{0.08cm}
\State 2. Simulate Hamiltonian dynamics as \small$(z''_m, \rho''_m) \leftarrow \T_m(z'''_m, \rho'''_m)$\normalsize.
\State \hspace{0.3cm} Calculate an acceptance probability \small$\alpha = \min\left(1, \unnorm \pi_m(z''_m, \rho''_m) / \unnorm \pi_m(z'''_m, \rho'''_m)\right)$\normalsize.
\State \hspace{0.3cm} With probability $\alpha$, set \small$(z'_m,\rho'_m) \leftarrow (z''_m,\rho''_m)$\normalsize. Otherwise, set \small$(z'_m, \rho'_m) \leftarrow (z'''_m, \rho'''_m)$\normalsize.
\vspace{0.08cm}
\State 3. Sample $\rho_m$ from $s_\mathrm{rev}(\rho_m\vert \rho'_m)$, the reversal of $s(\rho'_m\vert \rho_m)$ with respect to $\MD(\rho_m)$. Set $z_m \leftarrow z'_m$.
\State \Return $(z_{m}, \rho_{m})$
\end{algorithmic}
\end{algorithm}

\begin{lemma} \label{lemma:reversal}
The corrected $U_m$ (Alg.~\ref{alg:correctedum}) is the reversal of the corrected $T_m$ (Alg.~\ref{alg:correctedtm}) with respect to $\pi_m$.
\end{lemma}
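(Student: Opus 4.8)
The plan is to exploit the modular structure of $T_m$: it is a composition of three sub-kernels, and the reversal of a composition is the composition of the reversals taken in the opposite order. Writing $T_m = T_m^{(3)} \circ T_m^{(2)} \circ T_m^{(1)}$, where $T_m^{(1)}$ is the momentum resampling, $T_m^{(2)}$ is the Hamiltonian simulation with accept-reject, and $T_m^{(3)}$ is the momentum negation, I would first record the elementary fact that if each $T_m^{(i)}$ holds $\pi_m$ invariant and $U_m^{(i)}$ denotes its reversal with respect to $\pi_m$, then the reversal of $T_m$ is $U_m^{(1)} \circ U_m^{(2)} \circ U_m^{(3)}$. This follows by a short induction from the two-step case: using $\pi_m(x) T_m^{(1)}(w\vert x) = \pi_m(w) U_m^{(1)}(x\vert w)$ and the analogous identity for $T_m^{(2)}$, integrating out the intermediate variable in $\pi_m(x)\int T_m^{(2)}(y\vert w) T_m^{(1)}(w\vert x)\, dw$ reproduces $\pi_m(y)\int U_m^{(1)}(x\vert w) U_m^{(2)}(w\vert y)\, dw$.

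With this in hand, the task reduces to identifying the reversal of each of the three steps and reading off their reverse-order composition. For step 1 this is immediate: resampling $\rho$ via $s$ leaves $z$ fixed and holds $\MD$ invariant, so it holds $\pi_m(z,\rho)=\pi_m(z)\MD(\rho)$ invariant, and its reversal is exactly $s_\mathrm{rev}$ (the reversal of $s$ with respect to $\MD$), by the definition of $s_\mathrm{rev}$. For step 3, momentum negation $N(z,\rho)=(z,-\rho)$ is a unit-Jacobian involution, so $\delta(y - N(x)) = \delta(x - N(y))$, and since $\MD(\rho)=\MD(-\rho)$ the ratio $\pi_m(x)/\pi_m(y)$ equals $1$ on the support $y=N(x)$; hence the reversal of step 3 is again negation. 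Composing the reversals in reverse order, $U_m^{(3)}$ first, then $U_m^{(2)}$, then $U_m^{(1)}$, yields precisely the three steps of Alg.~\ref{alg:correctedum}: negate the momentum, run the (self-reverse) accept-reject step, then resample using $s_\mathrm{rev}$.

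The crux, and the step I expect to be the main obstacle, is showing that step 2 is its own reversal, i.e. that the accept-reject kernel satisfies detailed balance with respect to $\pi_m$. The difficulty is that this kernel is not absolutely continuous: it is a mixture $T_m^{(2)}(y\vert x) = \alpha(x)\,\delta\!\left(y - \T_m(x)\right) + \left(1-\alpha(x)\right)\delta(y-x)$ with $\alpha(x)=\min\!\left(1, \unnorm\pi_m(\T_m(x))/\unnorm\pi_m(x)\right)$, so detailed balance must be checked as an identity of distributions. The rejection part is symmetric trivially because the Dirac mass forces $x=y$. For the acceptance part I would use that $\T_m$ is a unit-Jacobian involution to write $\delta(y-\T_m(x)) = \delta(x-\T_m(y))$, and then observe that on this support $\pi_m(x)\alpha(x) = \min\!\left(\pi_m(x),\pi_m(\T_m(x))\right) = \min\!\left(\pi_m(x),\pi_m(y)\right)$, which is symmetric in $x$ and $y$ (the normalizer of $\pi_m$ cancels in $\alpha$); combining these gives $\pi_m(x) T_m^{(2)}(y\vert x) = \pi_m(y) T_m^{(2)}(x\vert y)$, so $U_m^{(2)} = T_m^{(2)}$. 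Care with the Dirac-delta manipulations (equivalently, testing against arbitrary functions and changing variables through $\T_m$) is the only genuinely delicate point; the rest is bookkeeping of which step maps to which line of Alg.~\ref{alg:correctedum}.
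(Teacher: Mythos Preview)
Your proposal is correct and follows essentially the same approach as the paper: establish that the reversal of a composition is the reverse-order composition of reversals, then identify the reversal of each of the three sub-kernels. The only notable difference is that where the paper simply cites Neal~\cite[\S3.2]{neal2011mcmc} for the fact that the accept-reject step and the momentum negation each satisfy detailed balance with respect to $\pi_m$, you spell out the detailed-balance argument for step~2 explicitly (handling the Dirac masses via the involution and unit-Jacobian properties of $\T_m$); this is extra detail, not a different route.
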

\vspace{-0.3cm}
\begin{proof}[(Proof Sketch)] 
First, we claim the general result that if $T_1$, $T_2$ and $T_3$ have reversals $U_1$, $U_2$ and $U_3$, respectively, then the composition $T = T_1 \circ T_2 \circ T_3$ has reversal $U = U_3 \circ U_2 \circ U_1$ (all reversals with respect to same density $\pi$). Then, we apply this to the corrected $T_m$ and $U_m$: $T_m$ is the composition of three steps that hold $\pi_m$ invariant. Thus, its reversal $U_m$ is given by the composition of the reversals of those steps, applied in reversed order. A full proof is in Appendix \ref{app:proofrev}.
\end{proof}

We now define the "uncorrected" transitions used by \UHA, shown in Algs.~\ref{alg:uncorrectedtm} and \ref{alg:uncorrectedum}. These are just the transitions used by Hamiltonian AIS but without the accept-reject steps. (If Hamiltonian dynamics are simulated exactly, the acceptance rate is one and the uncorrected and corrected transitions are equivalent.) We emphasize that, for the "uncorrected" transitions, $T_m$ does not exactly hold $\pi_m$ invariant and $U_m$ is not the reversal of $T_m$. Thus, their ratio does not give a simple expression in terms of $\unnorm \pi_m$ as in eq.~\ref{eq:haisest}. Nevertheless, the following result shows that their ratio has a simple form.
\begin{algorithm}[ht]
\caption{Uncorrected $T_m(z_{m+1}, \rho_{m+1} \vert z_m, \rho_m )$}
\label{alg:uncorrectedtm}
\begin{algorithmic}
\State 1. Sample $\rho'_m$ from some $s(\rho'_m \vert \rho_m)$ that leaves $\MD(\rho)$ invariant. Set $z'_m \leftarrow z_m$.
\State 2. Simulate Hamiltonian dynamics as \small$(z''_m, \rho''_m) \leftarrow \T_m(z'_m, \rho'_m)$\normalsize.
\State 3. Reverse the momentum as \small$(z_{m+1}, \rho_{m+1}) \leftarrow (z''_m, -\rho''_m)$\normalsize.
\State \Return $(z_{m+1}, \rho_{m+1})$
\end{algorithmic}
\end{algorithm}

\begin{algorithm}[ht]
\caption{Uncorrected $U_m(z_m, \rho_m \vert z_{m+1}, \rho_{m+1})$}
\label{alg:uncorrectedum}
\begin{algorithmic}
\State 1. Set \small$(z''_m, \rho''_m) \leftarrow (z_{m+1}, -\rho_{m+1})$\normalsize.
\State 2. Simulate Hamiltonian dynamics as \small$(z'_m, \rho'_m) \leftarrow \T_m(z''_m, \rho''_m)$\normalsize.
% \State 3. Sample $\rho_m$ from some $s(\cdot\vert \rho'_m)$ that leaves $\MD(\rho)$ invariant. Set $z_m \leftarrow z'_m$.
\State 3. Sample $\rho_m$ from $s_\mathrm{rev}(\rho_m\vert \rho'_m)$, the reversal of $s(\rho'_m\vert \rho_m)$ with respect to $\MD(\rho_m)$. Set $z_m \leftarrow z'_m$.
\State \Return $(z_{m}, \rho_{m})$
\end{algorithmic}
\end{algorithm}

\begin{theorem} \label{thm:ratioUT}
Let $T_m$ and $U_m$ be the uncorrected transitions defined in Algs.~\ref{alg:uncorrectedtm} and \ref{alg:uncorrectedum}, and let the dynamics simulator $\T_m(z, \rho)$ be volume preserving and self inverting. Then, 
\begin{equation}
\frac{U_m(z_m, \rho_m \vert z_{m+1}, \rho_{m+1})}{T_m(z_{m+1}, \rho_{m+1} \vert z_m, \rho_m)} = \frac{\MD(\rho_m)}{\MD(\rho'_m)}, \label{eq:R2thm1}
\end{equation}
where $\rho'_{m}$ is the second component of $\T_m(z_{m+1},-\rho_{m+1})$. (That is, $\rho'_{m}$ from Algs.~\ref{alg:uncorrectedtm} and \ref{alg:uncorrectedum}.)
\end{theorem}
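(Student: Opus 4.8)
The plan is to write each uncorrected transition as the composition of a genuinely stochastic momentum-resampling step and a deterministic, volume-preserving bijection of $(z,\rho)$, and then simply read off the two conditional densities and divide. The point to watch from the outset is that the position coordinate is never resampled (both algorithms set $z'_m \leftarrow z_m$, resp.\ $z_m \leftarrow z'_m$), so the transition densities are degenerate: they are supported on a $d$-dimensional manifold inside the $2d$-dimensional state space. I would therefore carry an explicit Dirac factor enforcing the position constraint and track it through the change of variables, rather than pretend the densities are absolutely continuous.

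First I would handle the forward kernel. Observe that $T_m$ is the resampling draw $\rho'_m \sim s(\cdot \mid \rho_m)$ with $z$ held fixed, followed by the deterministic map $F = N \circ \T_m$, where $N(z,\rho)=(z,-\rho)$ is momentum negation. Since $\T_m$ is volume preserving and self inverting and $N$ is an involution, $F$ is a volume-preserving bijection with $F^{-1}(z,\rho)=\T_m(z,-\rho)$. Pushing the resampling density through $F$ and using $|\det DF|=1$ gives
\begin{equation}
T_m(z_{m+1},\rho_{m+1}\mid z_m,\rho_m) = \delta(z'_m - z_m)\, s(\rho'_m \mid \rho_m),
\end{equation}
where $(z'_m,\rho'_m)=F^{-1}(z_{m+1},\rho_{m+1})=\T_m(z_{m+1},-\rho_{m+1})$, matching the $\rho'_m$ named in the statement.

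Next I would treat the reverse kernel in the same spirit. Here the deterministic map $G(z_{m+1},\rho_{m+1})=\T_m(z_{m+1},-\rho_{m+1})=(z'_m,\rho'_m)$ acts on the \emph{conditioning} variable rather than on the output, so it contributes no Jacobian, and the output is $z_m=z'_m$ together with $\rho_m \sim s_\mathrm{rev}(\cdot\mid \rho'_m)$. This yields
\begin{equation}
U_m(z_m,\rho_m\mid z_{m+1},\rho_{m+1}) = \delta(z_m - z'_m)\, s_\mathrm{rev}(\rho_m \mid \rho'_m),
\end{equation}
with the same $(z'_m,\rho'_m)$. The position constraints coincide, so the Dirac factors cancel in the ratio and leave $s_\mathrm{rev}(\rho_m\mid\rho'_m)/s(\rho'_m\mid\rho_m)$. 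Finally I would invoke the defining property of the reversal, $\MD(\rho_m)\,s(\rho'_m\mid\rho_m)=\MD(\rho'_m)\,s_\mathrm{rev}(\rho_m\mid\rho'_m)$, which collapses this to $\MD(\rho_m)/\MD(\rho'_m)$, as claimed.

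I expect the main obstacle to be the rigorous bookkeeping of the degenerate densities: justifying that the self-inverse, volume-preserving deterministic steps contribute a unit Jacobian, and that the position-constraint delta in the numerator and the one in the denominator are literally the same distribution (same constraint manifold, evaluated at the same $(z'_m,\rho'_m)$) so that their quotient is genuinely $1$ on the common support. Everything else — the momentum-resampling factors and the reversal identity — is then a direct substitution.
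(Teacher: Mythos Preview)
Your proposal is correct and follows essentially the same route as the paper: decompose each transition into the momentum-resampling step and the deterministic volume-preserving bijection $N\circ\T_m$ (with inverse $\T_m\circ N$), read off the two conditional densities, cancel the position-constraint factors, and finish with the reversal identity $s_\mathrm{rev}(\rho_m\mid\rho'_m)/s(\rho'_m\mid\rho_m)=\MD(\rho_m)/\MD(\rho'_m)$. The only notable difference is how the degeneracy in $z$ is handled: you carry explicit Dirac factors and argue they coincide and cancel, whereas the paper replaces each deterministic assignment $z'\leftarrow z$ by $z'\sim\mathcal N(z,aI)$, computes the ratio for finite $a$ (where the Gaussian factors cancel by symmetry), and then lets $a\to 0$; this smoothing device sidesteps the formal issue of dividing one delta by another that you flagged as your main obstacle.
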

\vspace{-0.4cm}
% \begin{proof}[(Proof Sketch.)]
% We derive the densities for $T_m$ and $U_m$ using the rule for transformation of densities under invertible mappings, using that $\mathcal{T}_m$ is self-inverting and volume preserving. We represent delta distributions as Gaussian in the zero variance limit. Taking the ratio gives the result. A full proof is in Appendix \ref{app:proofthm}.
% \end{proof}
\begin{proof}[(Proof Sketch.)]
We consider variants of Algs.~\ref{alg:uncorrectedtm} and \ref{alg:uncorrectedum} in which each time $z$ is assigned we add Gaussian noise with some variance $aI$. We then derive the densities for $T_m$ and $U_m$ using the rule for transformation of densities under invertible mappings, using that $\mathcal{T}_m$ is self-inverting and volume preserving. Taking the ratio gives eq.~\ref{eq:R2thm1}. Since this is true for arbitrary $a$, we take the stated result as the limit as $a \to 0$. A full proof is in Appendix \ref{app:proofthm}.
\end{proof}

\vspace{-0.2cm}
As an immediately corollary of eq. \ref{eq:haisobv} and Theorem \ref{thm:ratioUT} we get that for \UHA
\begin{equation}
\frac{\unnorm p(z_{1:M}, \rho_{1:M})}{q(z_{1:M}, \rho_{1:M})} = \frac{\unnorm p(z_M)}{q(z_1)} \prod_{m = 1}^{M - 1} \frac{\MD(\rho_{m+1})}{\MD(\rho'_{m})}. \label{eq:estuhais1}
\end{equation}
This ratio can be used to get an expression for the lower bound $\ELBO(q(z_{1:M}, \rho_{1:M}), \unnorm p(z_{1:M}, \rho_{1:M}))$ obtained with \UHA. As mentioned in Section \ref{sec:preliminaries}, the parameters of the augmented distributions are tuned to maximize the ELBO, equivalent to minimizing the KL-divergence from $q$ to $\unnorm p$. While computing this ELBO exactly is typically intractable, an unbiased estimate can be obtained using a sample from $q(z_{1:M}, \rho_{1:M})$ as shown in Alg.~\ref{alg:sampleuhais}. If sampling is done using reparameterization, then unbiased reparameterization gradients may be used together with stochastic optimization algorithms to optimize the lower bound. In contrast, the variational lower bound obtained with Hamiltonian AIS (see Alg.~\ref{alg:sampleais} in Appendix \ref{app:HAISboundalg}) does not allow the computation of unbiased reparameterization gradients.

\begin{algorithm}[ht]
\caption{Generating the (differentiable) uncorrected Hamiltonian annealing variational bound.} %\jd{This name is a bit modest. We should have a name like: OUR AMAZING MCMC-VI ALGORITHM. And I'd like to explain somewhere here what this algorithm does (gives an estimate of the ELBO nearly as good as AIS but differentiable)}}
\label{alg:sampleuhais}
\begin{algorithmic}
\State Sample $z_1 \sim q$ and $\rho_1 \sim \MD$.
\State Initialize estimator as $\mathcal{L} \leftarrow -\log q(z_1)$.
\For{$m = 1, 2, \cdots , M-1$}
	\State Run uncorrected $T_m$ (Alg.~\ref{alg:uncorrectedtm}) on input $(z_m, \rho_m)$, storing $\rho'_m$ and the output $(z_{m+1}, \rho_{m+1})$.
	\State Update estimator as $\mathcal{L} \leftarrow \mathcal{L} + \log \left( \MD(\rho_{m+1}) / \MD(\rho'_m) \right)$.
\EndFor
\State Update estimator as $\mathcal{L} \leftarrow \mathcal{L} + \log \unnorm p(z_M)$.
\State \Return $R$
\end{algorithmic}
\end{algorithm}

\vspace{-0.3cm}
\subsection{Algorithm Details}

\textbf{Simulation of dynamics.} We use the leapfrog operator with step-size $\epsilon$ to simulate Hamiltonian dynamics. This has unit Jacobian and satisfies $\mathcal{T}_m = \mathcal{T}_m^{-1}$ (if the momentum is negated after the simulation), which are the properties required for eq.~\ref{eq:estuhais1} to be correct (see Theorem \ref{thm:ratioUT}).

\textbf{Momentum distribution and resampling.} We set the momentum distribution $\MD(\rho) = \mathcal{N}(\rho \vert 0, \Sigma)$ to be a Gaussian with mean zero and covariance $\momcov$. The resampling distribution $s(\rho'\vert \rho)$ must hold this distribution invariant. As is common we use $s(\rho' \vert \rho) = \mathcal{N}(\rho' \vert \eta \rho, (1-\eta^2) \Sigma)$, where $\eta \in [0, 1)$ is the damping coefficient. If $\eta = 0$, the momentum is completely replaced with a new sample from $\MD$ in each iteration (used by HMC and Langevin dynamics \cite{neal2011mcmc, welling2011bayesian}). For larger $\eta$, the momentum becomes correlated between iterations, which may help suppress random walk behavior and encourage faster mixing \cite{cheng2018underdamped} (used by the underdamped variants of HMC and Langevin dynamics \cite{neal2011mcmc}).

\textbf{Bridging densities.} We set $\unnorm \pi_m(z, \rho) = q(z, \rho)^{1 - \beta_m} \unnorm p(z, \rho)^{\beta_m}$, where $\beta_m \in[0, 1]$ and $\beta_m < \beta_{m+1}$.

\textbf{Computing gradients.} We set the initial distribution $q(z_1)$ to be a Gaussian, and perform all sampling operations in Alg.~\ref{alg:sampleuhais} using reparameterization \cite{vaes_welling, rezende2014stochastic, doublystochastic_titsias}. Thus, the whole procedure is differentiable and reparameterization-based gradients may be used to tune parameters by maximizing the ELBO. These parameters include the initial distribution $q(z_1)$, the covariance $\momcov$ of the momentum distribution, the step-size $\epsilon$ of the integrator, the damping coefficient $\eta$ of the momentum resampling distribution, and the parameters of the bridging densities (including $\beta$), among others. As observed in Section \ref{sec:tuningmore} tuning all of these parameters may lead to considerable performance improvements.

%!TEX root = ../main.tex

\section{Related Work} \label{sec:relwork}

\UHA\ and slight variations have been proposed in concurrent work by Thin et al. \cite{thin2021monte}, who use uncorrected Langevin dynamics together with the uncorrected reversal to build variational lower bounds, and by Zhang et al. \cite{zhang2021differentiable}, who proposed \UHA\ with under-damped Langevin dynamics together with a convergence analysis for linear regression models.

There are three other lines of work that produce differentiable variational bounds integrating Monte Carlo methods. One is Hamiltonian VI (HVI) \cite{salimans2015markov, wolf2016variational}. It uses eq.~\ref{eq:haisobv} to build a lower bound on $\log Z$, with $T_m$ set to an uncorrected Hamiltonian transition (like \UHA\ but without bridging densities) and $U_m$ set to conditional Gaussians parameterized by learnable functions. Typically, a single transition is used, and the parameters of the transitions are learned by maximizing the resulting ELBO.\footnote{The formulation of HVI allows the use of more than one transition. However, this leads to an increased number of reverse models that must be learned, and thus not typically used in practice. Indeed, experiments by Salimans et al. \cite{salimans2015markov} use only one HMC step while varying the number of leapfrog integration steps, and results from Wolf et al. \cite{wolf2016variational} show that increasing the number of transitions may actually yield worse bounds (they conjecture that this is due to the difficulty of learning inverse dynamics.).}

A second method is given by Hamiltonian VAE (HVAE) \cite{caterini2018hamiltonian}, based on Hamiltonian Importance sampling \cite{neal2005hamiltonianis}. They augment the variational distribution with momentum variables, and use the leapfrog integrator to simulate Hamiltonian dynamics (a deterministic invertible transformation with unit Jacobian) with a tempering scheme as a target-informed flow \cite{rezende2015variational, tabak2013family}.

The third method is Importance Weighting (IW) \cite{IWVAE, cremer2017reinterpreting, domke2018importance}. Here, the idea is that $\ELBO(q(z), \unnorm p(z)) \leq \mathbb{E} \log \frac{1}{K} \sum_k \unnorm p(z_k) / q(z_k)$, and that the latter bound can be optimized, rather than the traditional ELBO. More generally, other Monte-Carlo estimators can be used \cite{domke2019divide}.

Some work defines novel contrastive-divergence-like objectives in terms of the final iteration of an MCMC chain \cite{ruiz2019contrastive, li2017approximate}. These do not provide an ELBO-like variational bound. While in some cases the initial distribution can be optimized to minimize the objective \cite{ruiz2019contrastive}, gradients do not flow through the MCMC chains, meaning MCMC parameters cannot be optimized by gradient methods.

For latent variable models, Hoffman \cite{hoffman2017MCMCvae} suggested to run a few MCMC steps after sampling from the variational distribution before computing gradients with respect to the model parameters, which is expected to "debias" the gradient estimator to be closer to the true likelihood gradient. The variational distribution is simultaneously learned to optimize a standard ELBO. (AIS can also be used \cite{ding2019vaeAIS}.)

%!TEX root = ../main.tex

\section{Experiments and Results} \label{sec:exps}

This section presents results using \UHA\ for Bayesian inference problems on several models of varying dimensionality and for VAE training. We compare against Hamiltonian AIS, IW, HVI and HVAE. We report the performance of each method for different values of $K$, the number of likelihood evaluations required to build the lower bound (e.g. number of samples used for IW, number of bridging densities plus one for \UHA). Note that, for a fixed $K$, all methods have the same oracle complexity (i.e. number of target/target's gradient evaluation), and that for $K=1$ they all reduce to plain VI.
% \footnote{For IW $K$ is the number of samples used to build the lower bound, for \UHA\ and Hamiltonian AIS it is the number of bridging densities used plus one, for HVI and HVAE it is the number of leapfrog steps used plus one.}

For \UHA\ and Hamiltonian AIS we use under-damped Langevin dynamics, that is, we perform just one leapfrog step per transition and partially resample momentum. We implement all algorithms using Jax \cite{jax2018github}.

\subsection{Toy example}

This section compares results obtained with \UHA\ and IW when the target is set to a factorized Student-t with mean zero, scale one, and three degrees of freedom. We tested three different dimensionalities: $20$, $200$ and $500$. In all cases we have $\log Z = 0$, so we can exactly analyze the tightness of the bounds obtained by the methods. We set the initial approximation to be a mean-field Gaussian, and optimize the objective using Adam \cite{adam} with a step-size of $0.001$ for $5000$ steps. For \UHA\ we tune the initial approximation $q(z)$, the integrator's step-size $\epsilon$ and the damping coefficient $\eta$.

We ran \UHA\ for $K\in\{4, 16, 64, 128\}$ and IW for $K\in\{128, 1024\}$. Table \ref{tab:toy} shows the results for the three dimensionalities considered. It can be observed that \UHA\ performs significantly better than IW as the dimensionality increases; for the target with dimension $500$, \UHA\ with $K=16$ yields better bounds than IW with $K=1024$. On the other hand, the methods perform similarly for the low dimensional target. Finally, in this case both methods have similar time costs. For instance, for $K=128$ \UHA\ takes $14.2$ seconds to optimize and IW takes $13.9$.

% \begin{table}[]
% \caption{\textbf{\UHA\ yields better bounds than plain VI and IW.} ELBO achieved by different methods when using a Student-t target distribution ($\log Z = 0$), higher is better. Optimization times (in seconds) for different methods are $2.7$ for plain VI, $[3.3, 4.2, 8.3, 14.2]$ for \UHA\ with $K \in \{4, 16, 64, 128\}$, and $[15.1, 43.3]$ for IW with $K \in \{128, 1024\}$.}
% \label{tab:toy}
% \centering
% \begin{tabular}{lllllll}
% \toprule
% \multirow{2}{*}{Plain VI} & \multicolumn{4}{c}{UHA} & \multicolumn{2}{c}{IW} \\
% \cmidrule(l{2pt}r{2pt}){2-5} \cmidrule(l{2pt}r{2pt}){6-7}
% & $K=4$   & $K=16$  & $K=64$  & $K=128$ & $K=128$  & $K=1024$ \\
% \midrule
% $-0.82$ & $-0.55$ & $-0.36$ & $-0.19$ & $-0.14$ & $-0.14$ & $-0.088$ \\
% $-8.1$ & $-5.5$  & $-3.5$  & $-1.9$  & $-1.4$  & $-3.7$ & $-2.9$ \\
% $-20.5$ & $-13.9$ & $-9.0$  & $-5.2$  & $-3.8$  & $-12.0$ & $-10.4$ \\
% \bottomrule
% \end{tabular}
% \end{table}

\begin{table}[]
\caption{\textbf{Our method (\UHA) yields better bounds than importance weighting (IW) for moderate or high dimensions.} ELBO achieved by different methods when using a Student-t target distribution of varying dimensionality, higher is better. Since the target is normalized, a perfect inference algorithm would achieve the true value of $\log Z = 0$.}
\label{tab:toy}
\centering
\begin{tabular}{llllllll}
\toprule
Target & Plain VI & \multicolumn{4}{c}{UHA} & \multicolumn{2}{c}{IW} \\
\cmidrule(l{2pt}r{2pt}){2-2} \cmidrule(l{2pt}r{2pt}){3-6} \cmidrule(l{2pt}r{2pt}){7-8}
dimension  &$K=1$& $K=4$   & $K=16$  & $K=64$  & $K=128$ & $K=128$  & $K=1024$ \\
\midrule
$20$ & $-0.82$ & $-0.55$ & $-0.36$ & $-0.19$ & $-0.14$ & $-0.14$ & $-0.088$ \\
$200$ & $-8.1$ & $-5.5$  & $-3.5$  & $-1.9$  & $-1.4$  & $-3.7$ & $-2.9$ \\
$500$ & $-20.5$ & $-13.9$ & $-9.0$  & $-5.2$  & $-3.8$  & $-12.0$ & $-10.4$ \\
\bottomrule
\end{tabular}
\end{table}

\subsection{Inference tasks} \label{sec:exps_real}

This section shows results using \UHA\ for Bayesian inference tasks. For this set of experiments, for \UHA\ we tune the initial distribution $q(z)$, the integrator's step-size $\epsilon$ and the damping coefficient $\eta$. We include detailed results tuning more parameters in Section \ref{sec:tuningmore}.

\textbf{Models.} We consider four models: \textit{Brownian motion} ($d = 32$), which models a Brownian Motion process with a Gaussian observation model; \textit{Convection Lorenz bridge} ($d = 90$), which models a nonlinear dynamical system for atmospheric convection; and \textit{Logistic regression} with the a1a ($d = 120$) and madelon ($d = 500$) datasets. The first two obtained from the ``Inference gym'' \cite{inferencegym2020}.

\textbf{Baselines.} We compare \UHA\ against IW, HVAE, a simple variant of HVI, and Hamiltonian AIS (HAIS). For all methods which rely on HMC (i.e. all except IW) we use a singe integration step-size $\epsilon$ common to all dimensions and fix the momentum distribution to a standard Gaussian. For HVI we learn the initial distribution $q(z)$, integration step-size $\epsilon$ and the reverse dynamics $U_m$ (set to a factorized Gaussian with mean and variance given by affine functions), and for HVAE we learn $q(z)$, $\epsilon$ and the tempering scheme (we use the quadratic scheme parameterized by a single parameter).

\textbf{Training details.} We set $q(z)$ to be a mean-field Gaussian initialized to a maximizer of the ELBO, and tune the parameters of each method by running Adam for $5000$ steps. We repeat all simulations for different step-sizes in $\{10^{-3}, 10^{-4}, 10^{-5}\}$, and select the best one for each method. Since Hamiltonian AIS' parameters cannot be tuned by gradient descent, we find a good pair $(\epsilon, \eta)$ by grid search. We consider $\eta \in \{0.5, 0.9, 0.99\}$ and three values of $\epsilon$ that correspond to three different rejection rates: $0.05, 0.25$ and $0.5$. We tested all 9 possible combinations and selected the best one.

Results are shown in Fig.~\ref{fig:comparison1}.
% To simplify comparisons against IW, results are shown as a function of $K$, the number of likelihood evaluations required by each method to build the lower bound.\footnote{For IW $K$ is the number of samples used to build the lower bound, for \UHA\ and Hamiltonian AIS it is the number of bridging densities plus one, for HVI and HVAE it is the number of leapfrog steps used plus one.}
Our method yields better lower bounds than all other competing approaches for all models considered, and that increasing the number of bridging densities consistently leads to better results. The next best performing method is Hamiltonian AIS. IW also shows a good performance for the lower dimensional model \textit{Brownian motion}. However, for models of higher dimensionality IW leads to bounds that are several nats worse than the ones achieved by \UHA. Finally, HVI and HVAE yield bounds that are much worse than those achieved by the other three methods, and do not appear to improve consistently for larger $K$. For HVAE, these results are consistent with the ones in the original paper \cite[\S4]{caterini2018hamiltonian}, in that higher $K$ may sometimes hurt performance. For HVI, we believe this is related to the use of just one HMC step and suboptimal inverse dynamics.

Optimization times for Plain VI, IW and \UHA\ (the latter two with $K=32$) are $2.4, 3.4 \mbox{ and } 4.4$ seconds for the \textit{Brownian motion} dataset, $2.5, 6.8 \mbox{ and } 6.9$ seconds for \textit{Lorenz convection}, $2.8, 8.3 \mbox{ and } 19.9$ seconds for \textit{Logistic regression (A1A)}, and $4.6, 16.6 \mbox{ and } 121.2$ seconds for \textit{Logistic regression (Madelon)}. While IW and \UHA\ have the same oracle complexity for the same $K$, we see that the difference between their time cost depends on the specific model under consideration. All other methods that use HMC have essentially the same time cost as \UHA.

\begin{figure}[ht]
  \centering
  \includegraphics[scale=0.29, trim = {0 2.1cm 0 0}, clip]{./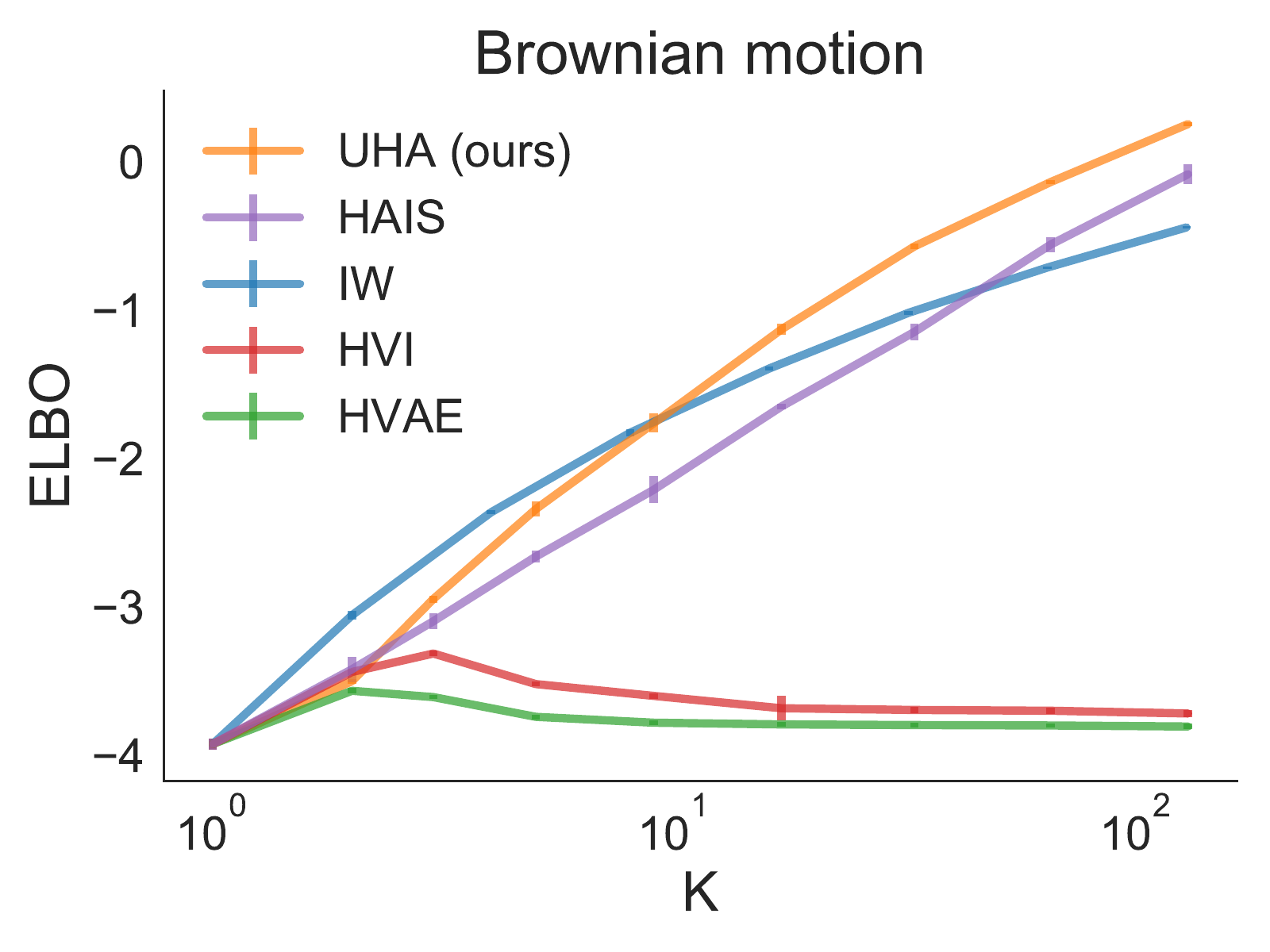}
  \includegraphics[scale=0.29, trim = {1.3cm 2.1cm 0 0}, clip]{./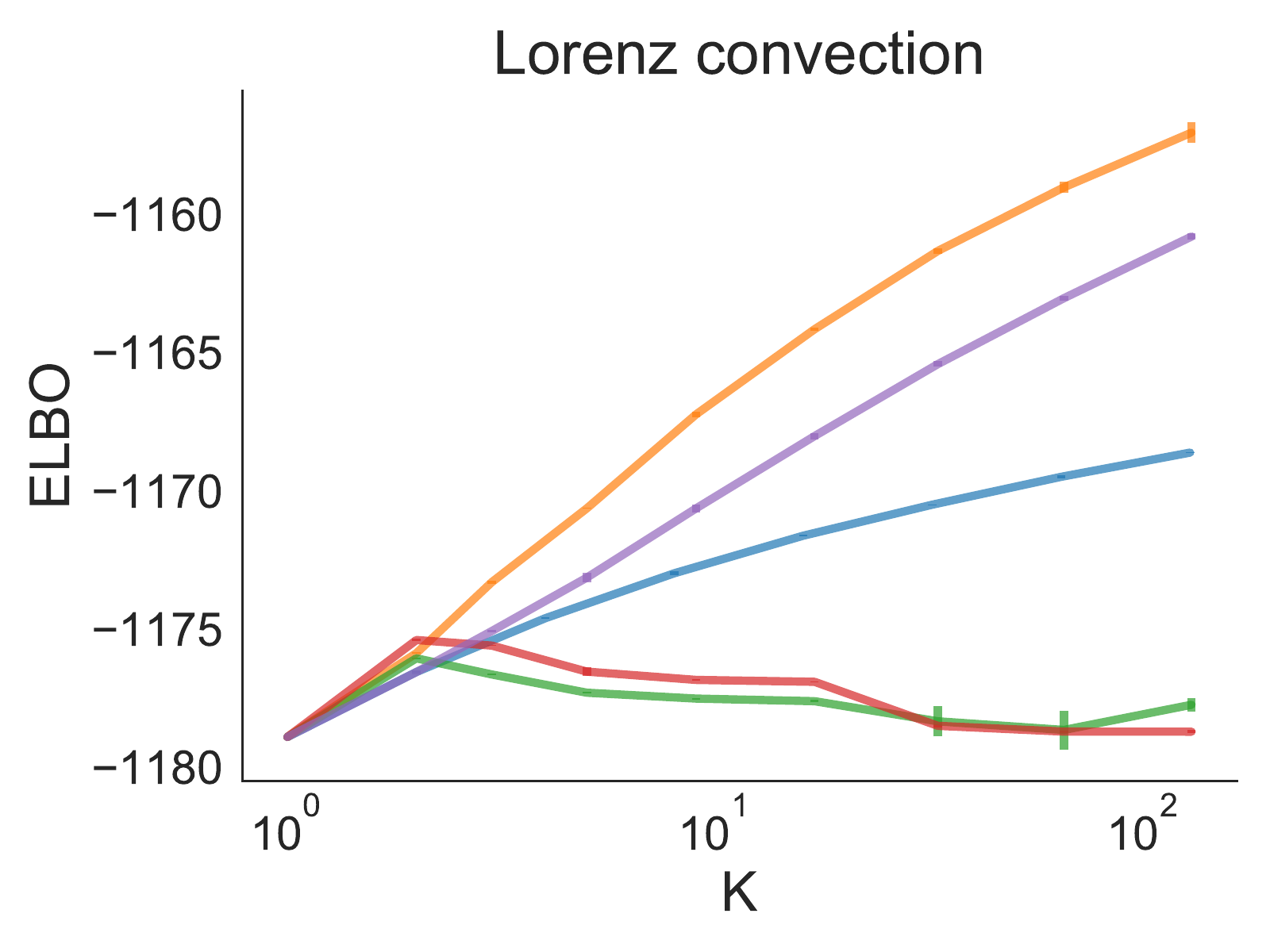}

  \includegraphics[scale=0.29, trim = {0 0 0 0}, clip]{./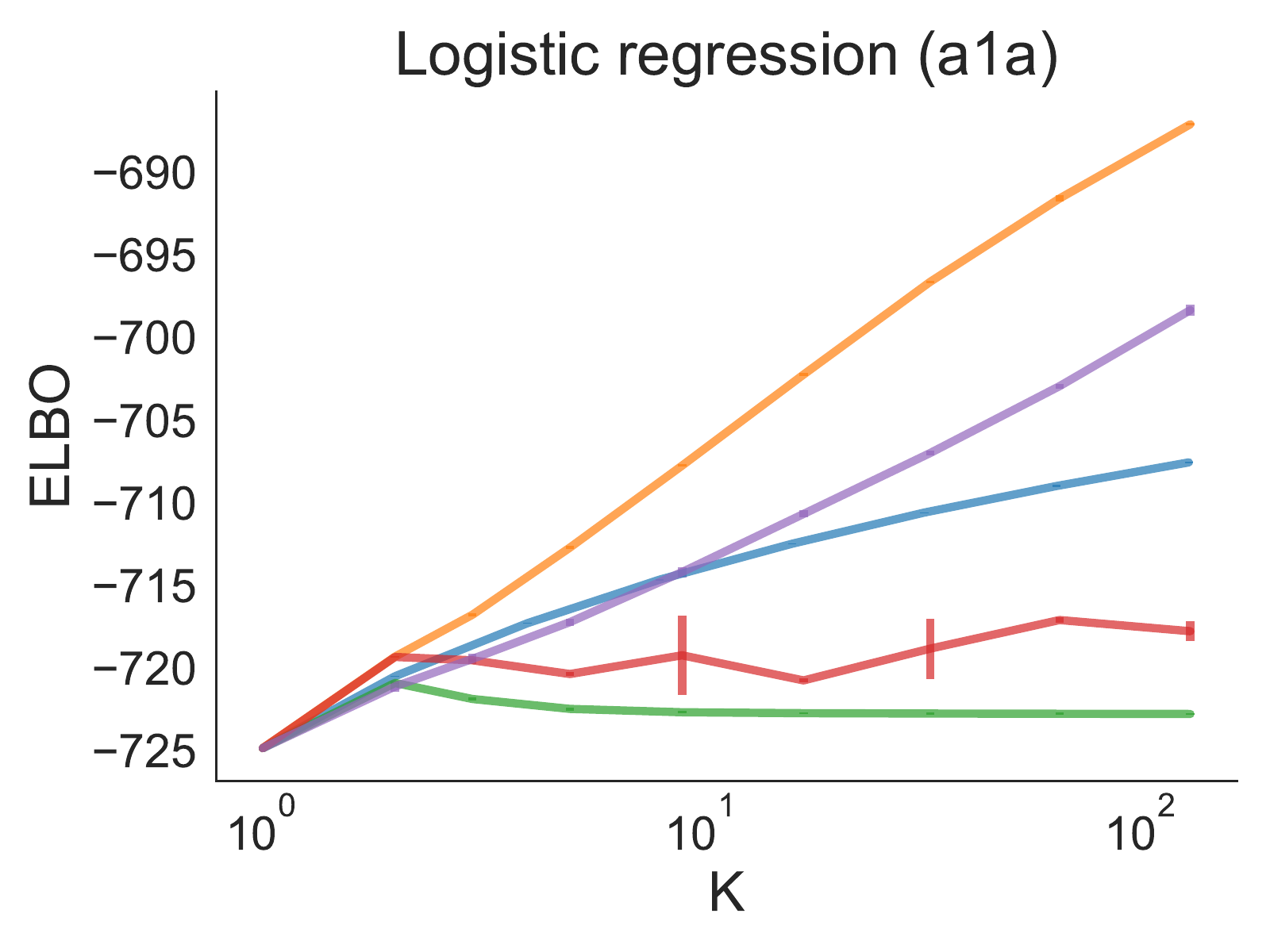}
  \includegraphics[scale=0.29, trim = {1.3cm 0 0 0}, clip]{./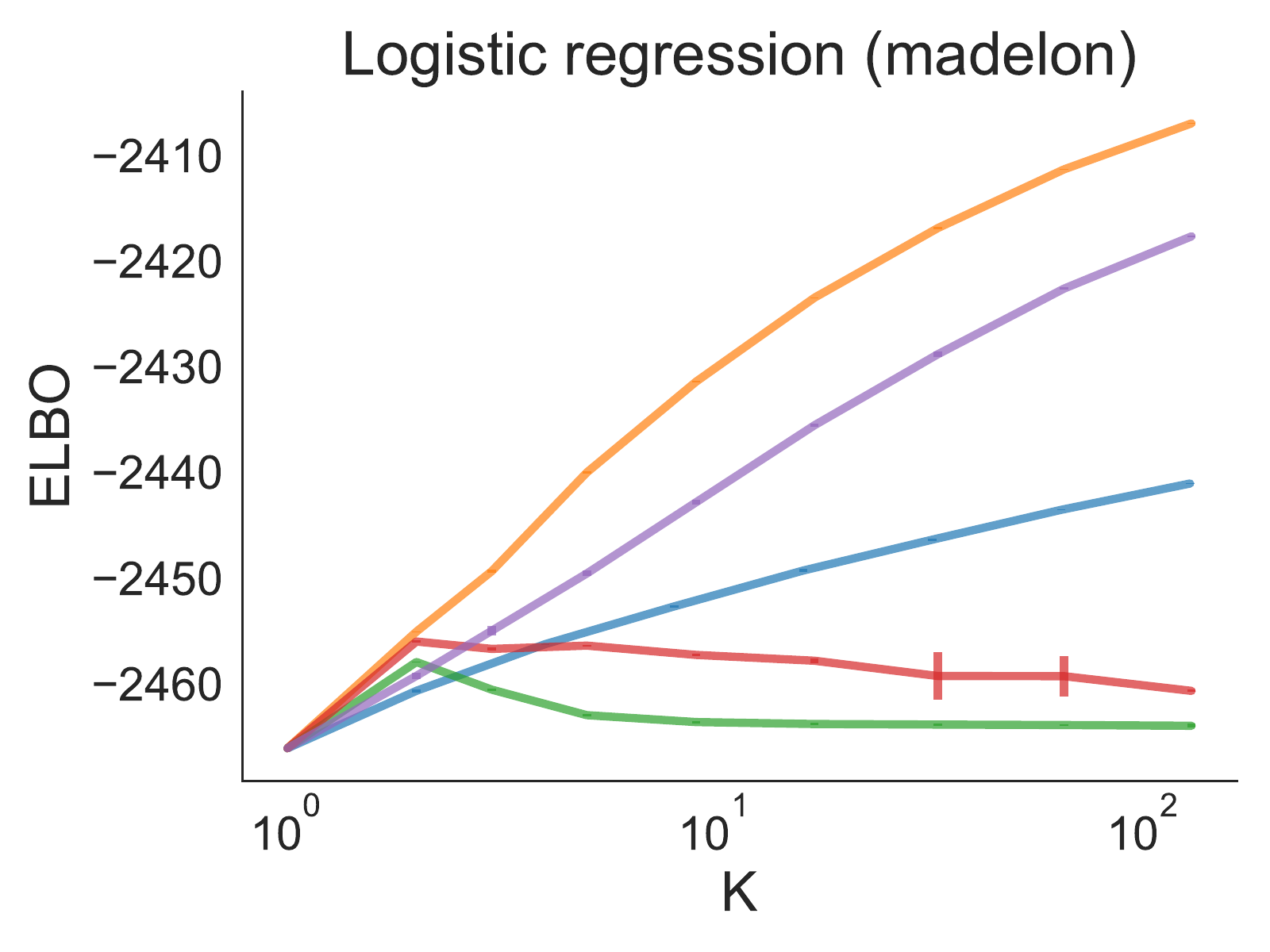}
  \caption{\textbf{Our method achieves much better bounds than other competing methods.} $K$ represents the number of likelihood evaluations to build the lower bound required by each method. The leftmost point of all lines coincide because, for $K = 1$, all methods reduce to plain VI. Vertical bars indicate one standard deviation obtained by running simulations with four different random seeds. %Optimization times for different methods (in seconds, for the datasets \textit{Brownian motion}, \textit{Lorenz convection}, \textit{Logistic reg (A1A)}, \textit{Logistic reg (Madelon)} in this order) are $[2.4, 2.7, 2.9, 4.5]$ for plain VI, $[4.5, 6.9, 22.7, 143.2]$ for \UHA\ with $K=32$, and $[9.1, 3.4, 9.2, 22.3]$ for IW with $K=32$.
  }
  \label{fig:comparison1}
\end{figure}

\subsubsection{Tuning More Parameters with \UHA} \label{sec:tuningmore}

A basic version of \UHA\ involves fitting a variational distribution using plain VI, and then tuning the integration step-size $\epsilon$ and the damping coefficient $\eta$. However, more parameters could be tuned:

\vspace{-0.2cm}
\begin{itemize}[leftmargin=0.5cm]
\setlength\itemsep{-0.06cm}
\item Moment distribution cov $\momcov$: We propose to learn a diagonal matrix instead of using the identity.
\item Bridging densities' coefficients $\beta_m$: Typically $\beta_m = m / M$. We propose to learn the sequence $\beta$, with the restrictions $\beta_0 = 0$, $\beta_M = 1$, $\beta_m < \beta_{m+1}$ and $\beta_m \in [0, 1]$.
\item Initial distribution $q(z)$: Instead of fixing $q(z)$ to be a maximizer of the typical ELBO, we propose to learn it to maximize the augmented ELBO obtained using \UHA.
\item Integrator's step-size $\epsilon$: Instead of learning a unique step-size $\epsilon$, we propose to learn a step-size that is a function of $\beta$, i.e. $\epsilon(\beta)$. In our experiments we use an affine function.
\item Bridging densities parameters $\psi$: Instead of setting the $m$-th bridging density to be $q^{1 - \beta_m} p^{\beta_m}$, we propose to set it to $q_{\psi(\beta_m)}^{1 - \beta_m} \, p^{\beta_m}$, where $q_{\psi(\beta_m)}$ is a mean-field Gaussian with a mean and diagonal covariance specified as affine functions of $\beta$.
\end{itemize}

We consider the four models described previously and compare three methods: \UHA\ tuning all parameters described above, \UHA\ tuning only the pair $(\epsilon, \eta)$, and Hamiltonian AIS with parameters $(\epsilon, \eta)$ obtained by grid-search. We perform the comparison for $K$ ranging from $2$ to $512$. (For $K \geq 64$ we tune the \UHA's parameters using $K = 64$ and extrapolate them as explained in Appendix \ref{sec:extrapol}.)

Results are shown in Fig.~\ref{fig:tuning_more_1}. It can be observed that tuning all parameters with \UHA\ leads to significantly better lower bounds than those obtained by Hamiltonian AIS (or \UHA\ tuning only $\epsilon$ and $\eta$). Indeed, for the Logistic regression models, \UHA\ tuning all parameters for $K = 64$ leads to results comparable to the ones obtained by Hamiltonian AIS with $K = 512$.

\begin{figure}[ht]
  \centering
  \includegraphics[scale=0.2, trim = {0 0 0 0}, clip]{./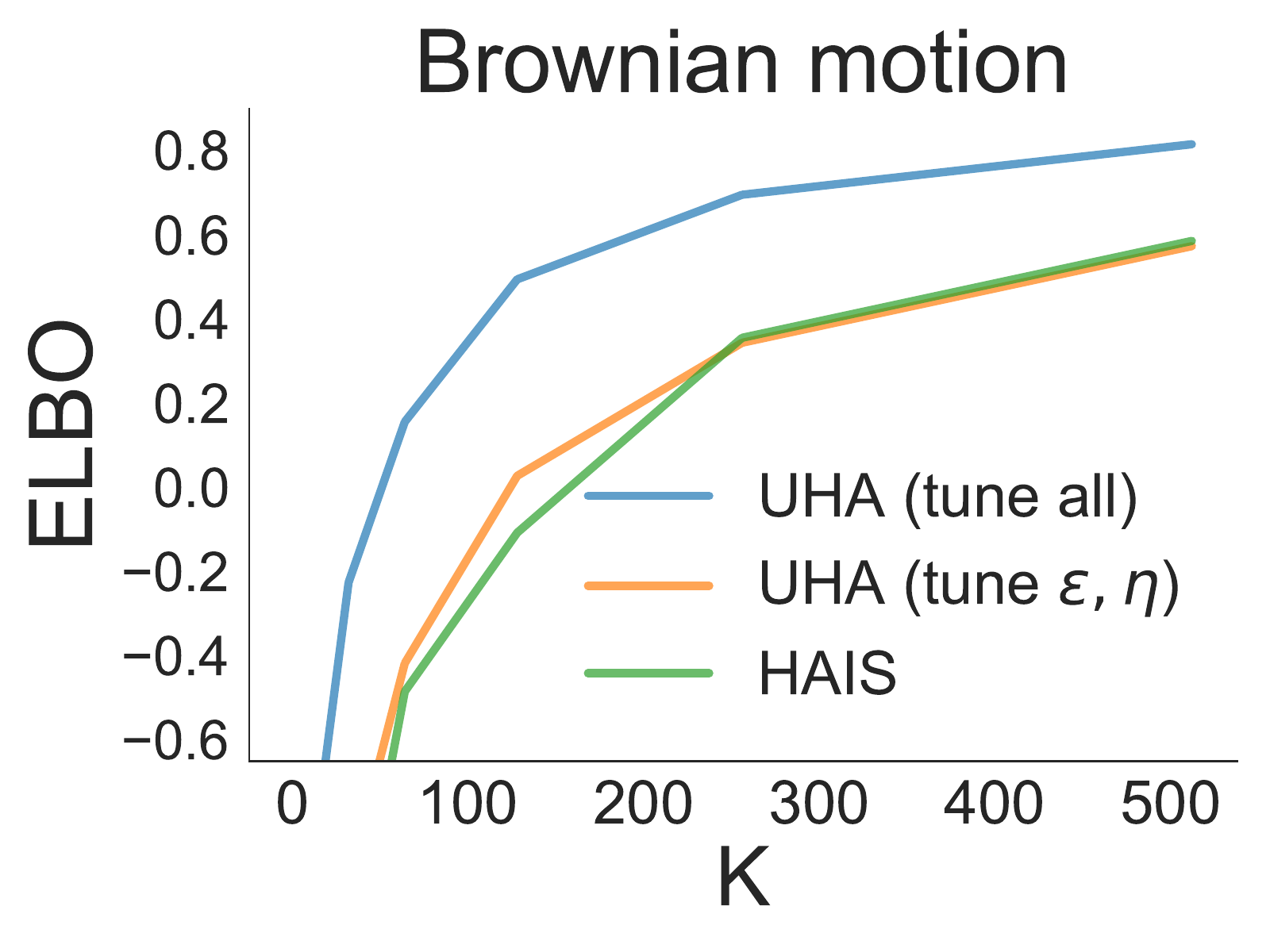}\hfill
  \includegraphics[scale=0.2, trim = {1.5cm 0 0 0}, clip]{./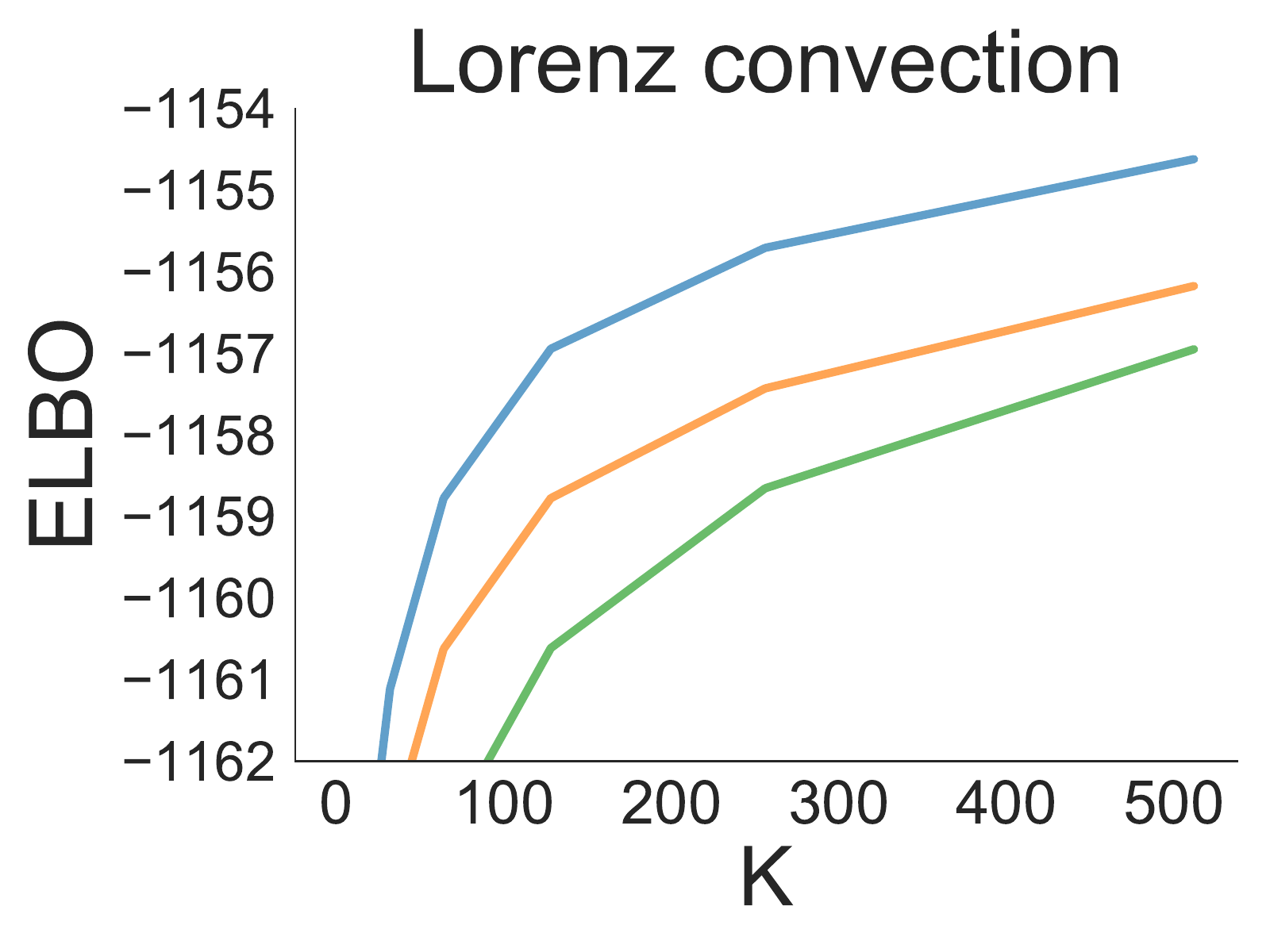}\hfill
  \includegraphics[scale=0.2, trim = {1.5cm 0 0 0}, clip]{./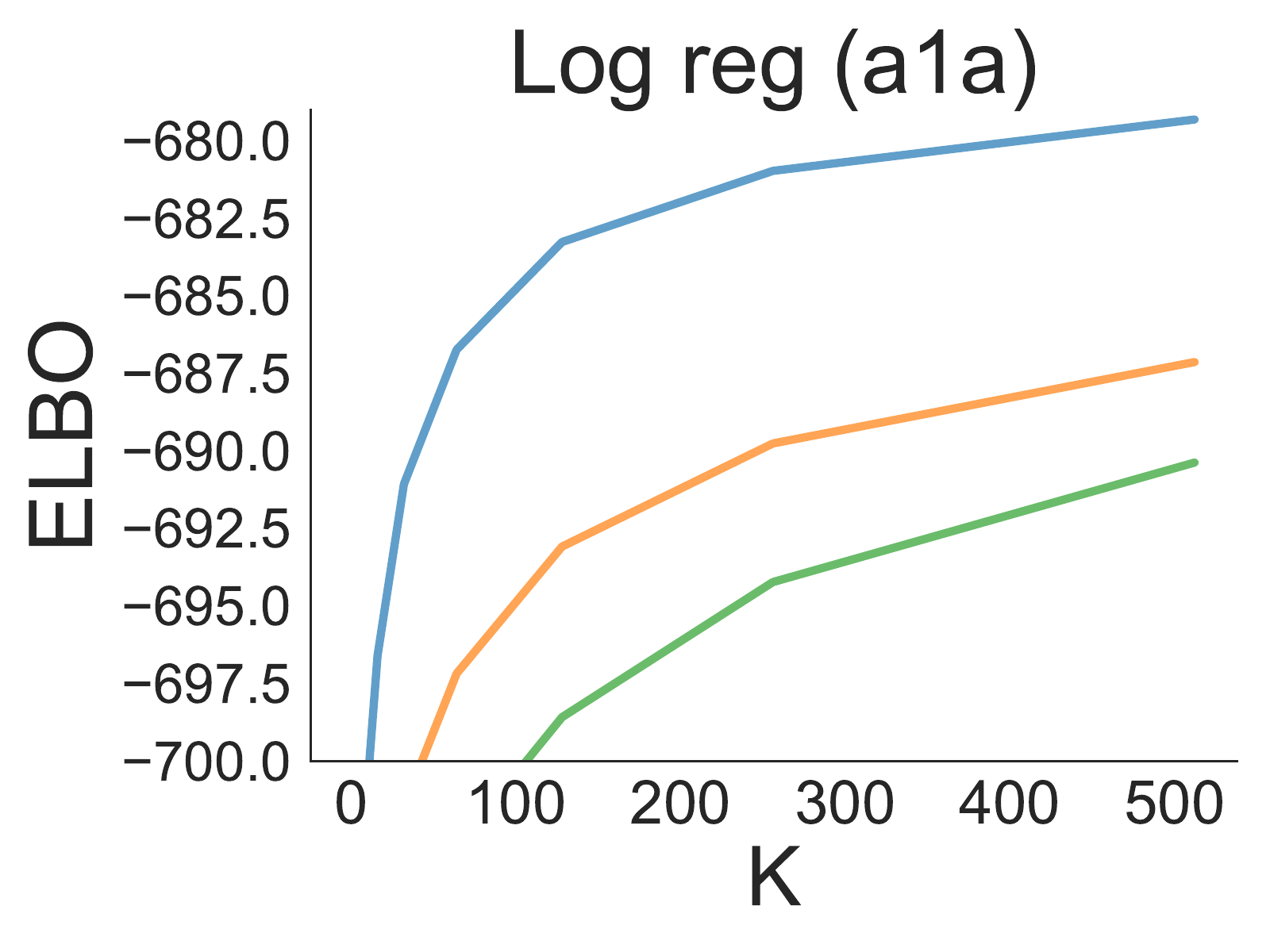}\hfill
  \includegraphics[scale=0.2, trim = {1.5cm 0 0 0}, clip]{./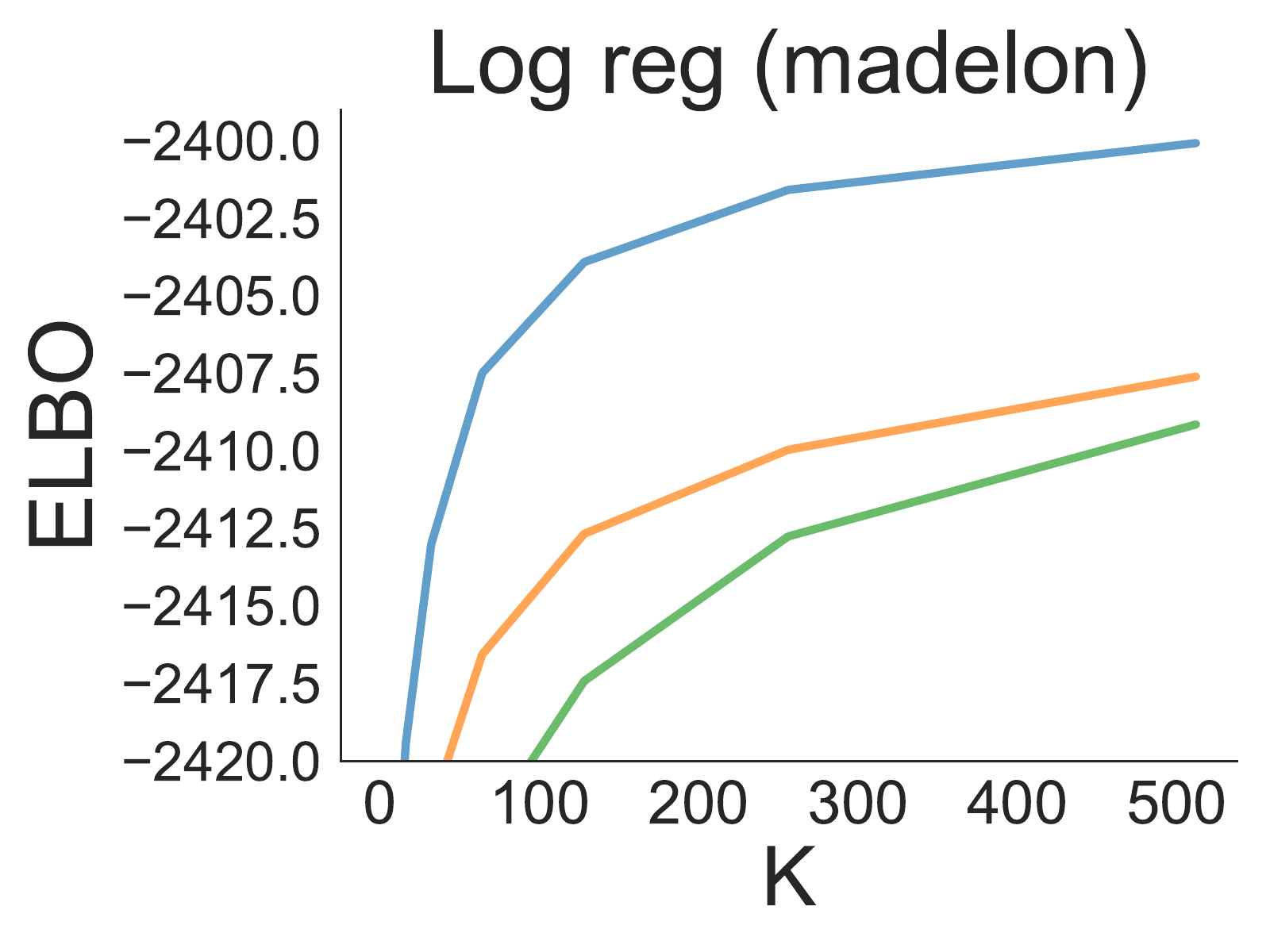}
  \caption{\textbf{\UHA\ tuning all parameters leads to better performance than other methods.}}
  \label{fig:tuning_more_1}
\end{figure}

To verify what parameters lead to larger performance improvements, we tested \UHA\ with $K = 64$ tuning different subsets of $\{\epsilon, \eta, \momcov, \beta, q(z), \epsilon(\beta), \psi(\beta)\}$. Fig.~\ref{fig:tuning_more_1_fig} shows the results. It can be observed that tuning the bridging parameters $\beta$ and the initial approximation $q(z)$ leads to the largest gains in performance, and that tuning all parameters always outperforms tuning smaller subsets of parameters. We show a more thorough analysis, including more subsets and values of $K$ in Appendix \ref{app:tunemoreuha}.

\begin{figure}[ht]
  \centering
  \includegraphics[scale=0.21, trim = {0 0 0 0}, clip]{./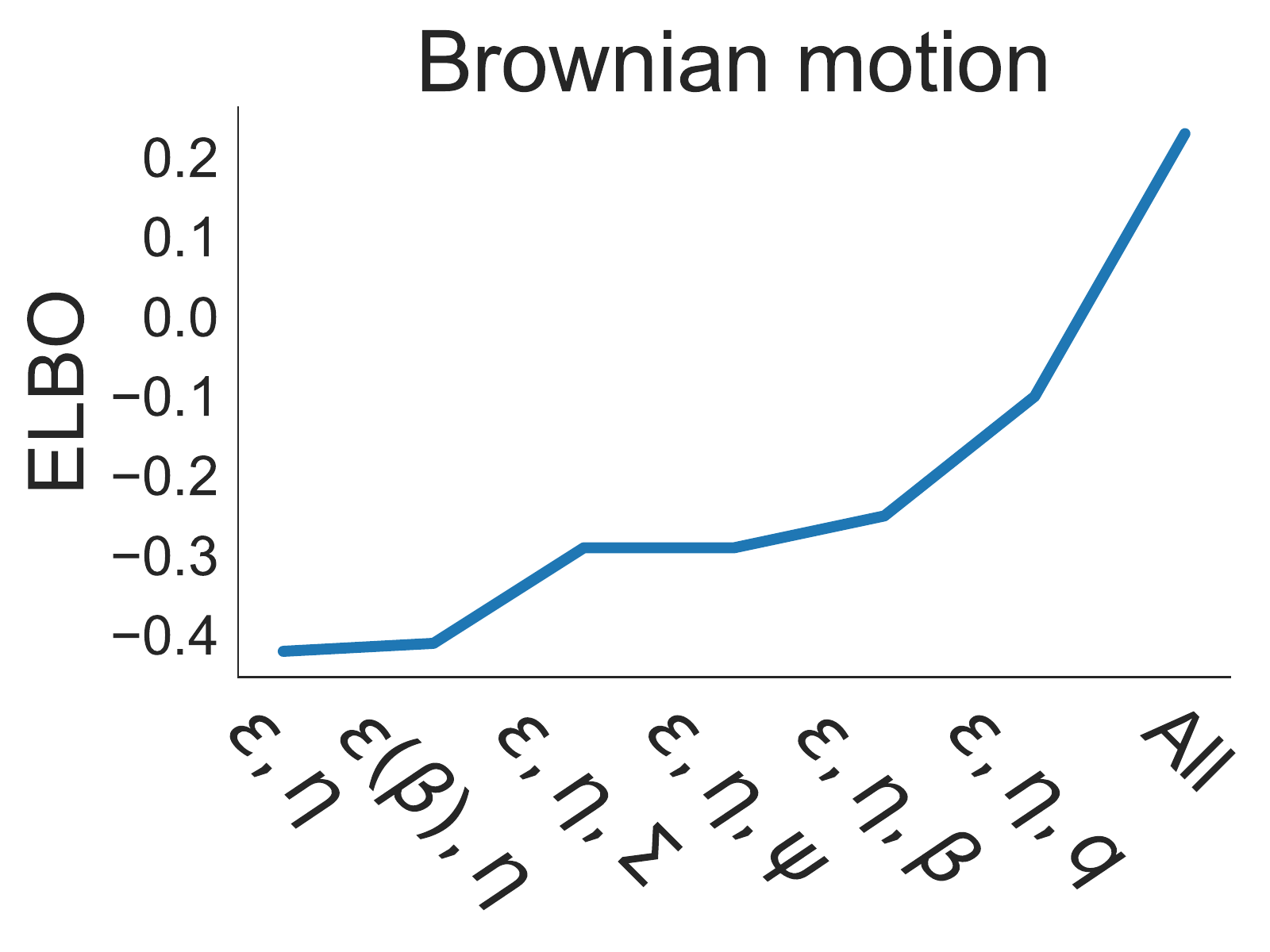}\hfill
  \includegraphics[scale=0.21, trim = {0 0 0 0}, clip]{./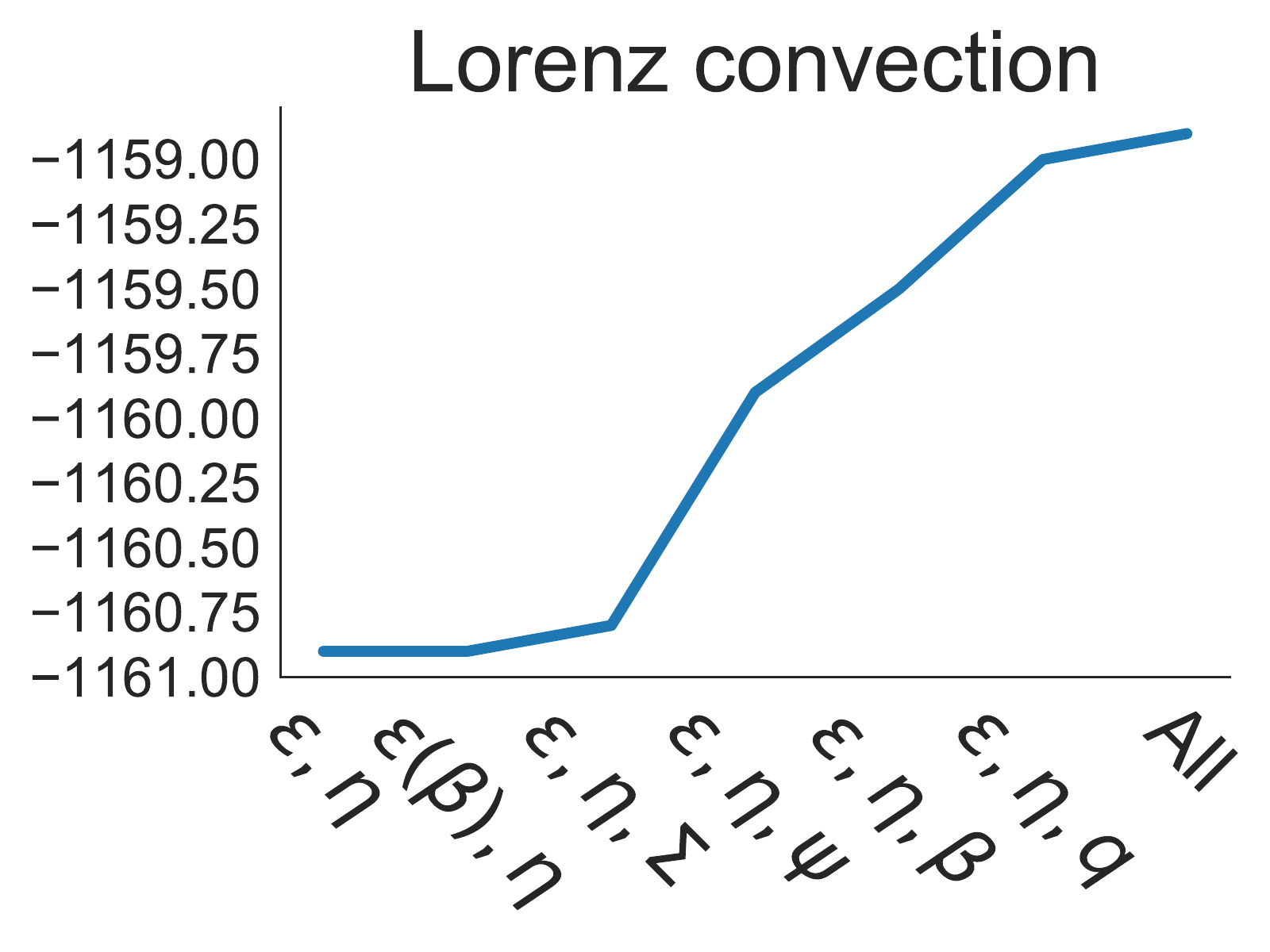}\hfill
  \includegraphics[scale=0.21, trim = {0 0 0 0}, clip]{./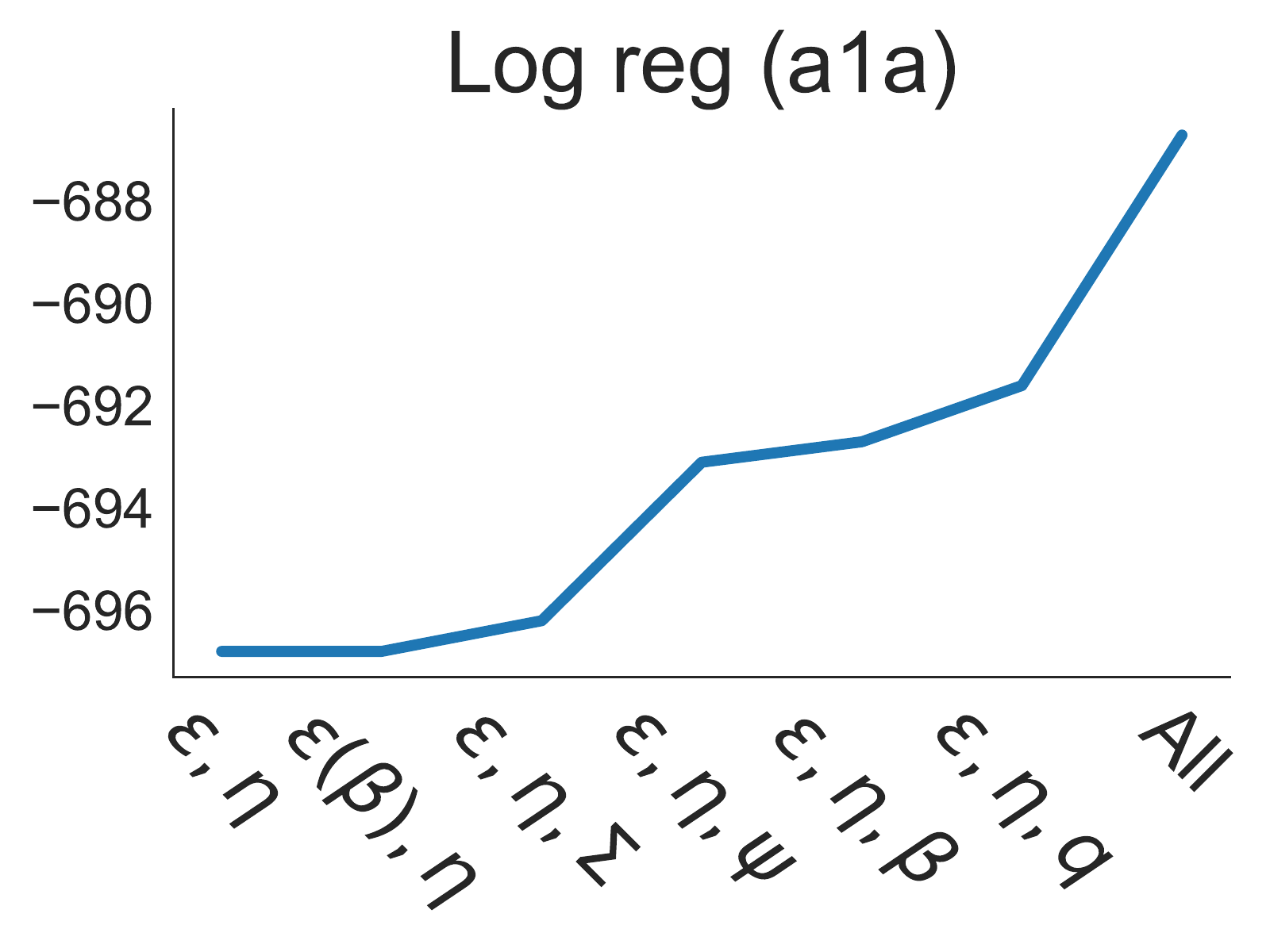}\hfill
  \includegraphics[scale=0.21, trim = {0 0 0 0}, clip]{./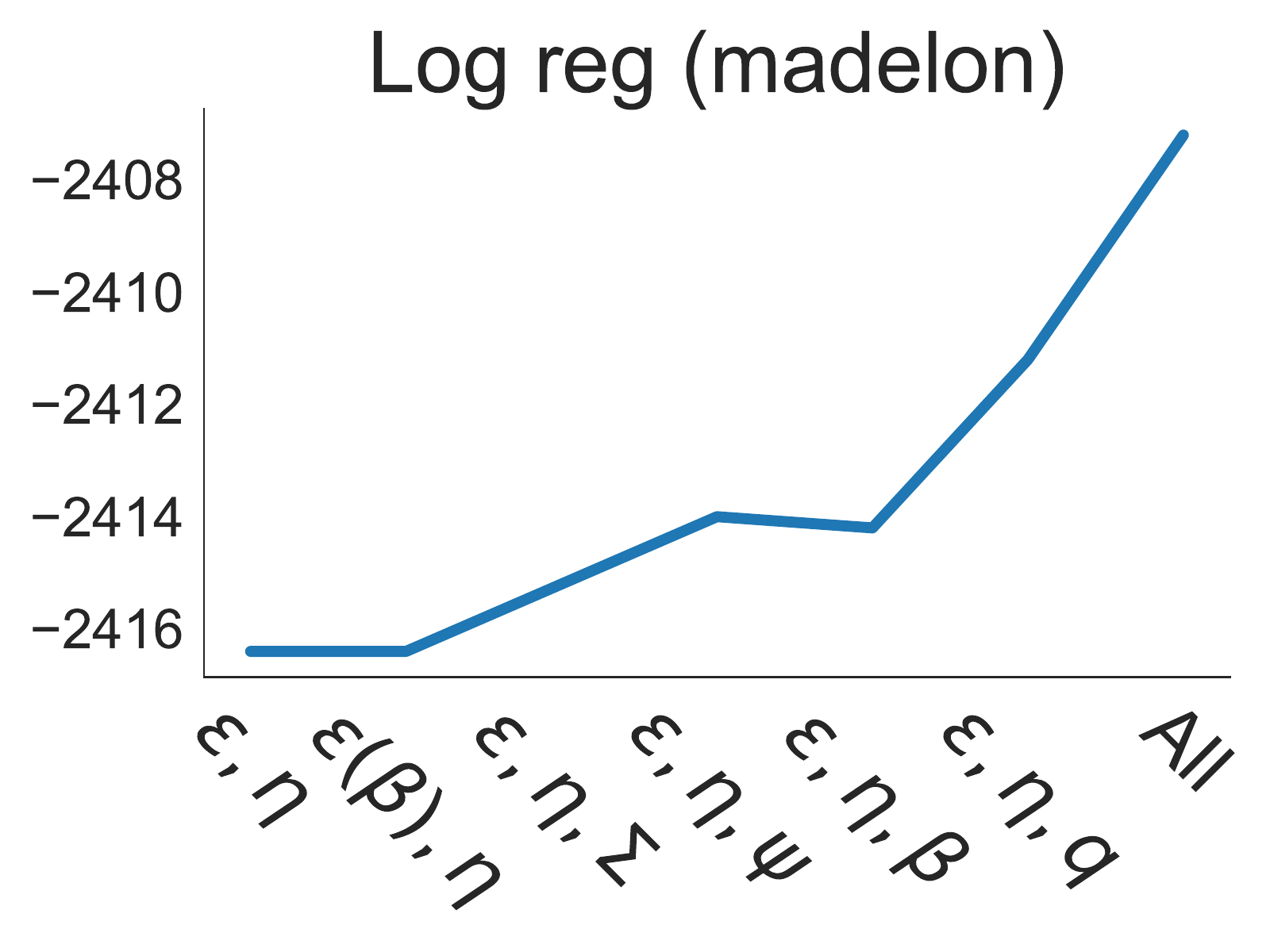}
  \caption{\textbf{Tuning all parameters leads to better results than tuning subsets of them. Largest gains are obtained by tuning bridging coefficients $\beta$ and initial distribution $q$.} ELBO achieved as a function of parameters tuned (x-axis), for $K = 64$. The subsets are ordered in terms of increasing performance (same ordering is used for all four models). Parameters are step-size $\epsilon$, damping coefficient $\eta$, moment covariance $\Sigma$, bridging densities parameters $\beta$ and $\psi$, initial distribution $q$.}
  \label{fig:tuning_more_1_fig}
\end{figure}

Finally, Appendix \ref{app:comparehmc} shows results comparing \UHA\ (tuning several parameters) against HMC, mean field VI and IW in terms of the approximation accuracy achieved on a logistic regression model with a fixed computational budget.

\subsection{VAE training}

Our method can be used to train latent variable models, such as Variational Auto-encoders (VAE) \cite{vaes_welling, rezende2014stochastic}. In this case the initial approximation $q(z|x)$ and the model $p(x, z)$ are parameterized by two neural networks (encoder and decoder), whose parameters are trained by maximizing the ELBO. \UHA\ can be used to train VAEs by augmenting these two distributions as described in Section \ref{sec:uhais}.

\textbf{Datasets.} We use three datasets: mnist \cite{lecun1998gradient} (numbers 1-9), emnist-letters \cite{cohen2017emnist} (letters A-Z), and kmnist \cite{clanuwat2018deep} (cursive Kuzushiji). All consist on greyscale images of $28\times 28$ pixels. In all cases we use stochastic binarization \cite{salakhutdinov2008quantitative} and a training set of $50000$ samples, a validation set of $10000$ samples, and a test set of $10000$ samples. All datasets are available in tensorflow-datasets \cite{TFDS}.

\textbf{Baselines.} We compare against Importance Weighted Auto-encoders \cite{IWVAE} and plain VAE training \cite{vaes_welling}.

% \textbf{Architecture details.} We set the encoder and decoder to be neural networks with one hidden layer of size $450$ with \textit{Relu} non-linearity, and the latent space dimensionality to $64$. We set $q(z\vert x)$ to a diagonal Gaussian whose mean and log-variance are given by the output of the encoder, $p(z)$ to a standard Normal, and $p(x\vert z)$ to a Bernoulli whose parameters are given by the output of the decoder.

\textbf{Architecture details.} We set $q(z|x)$ to a diagonal Gaussian, $p(z)$ to a standard Normal, and $p(x\vert z)$ to a Bernoulli. We consider two architectures for the encoder and decoder: (1) Feed forward networks with one hidden layer of size $450$ and Relu non-linearities, with a latent space dimensionality of $64$; (2) Architecture used by Burda et al. \cite{IWVAE}, feed forward networks with two hidden layers of size $200$ with tanh non-linearities, with a latent space dimensionality of $50$.

% \textbf{Architecture details.} The encoder and decoder are neural networks with one hidden layer of size $450$ with \textit{Relu} non-linearity, and the latent space dimensionality is $64$. We set $q(z\vert x)$ to a diagonal Gaussian parameterized the encoder, $p(z)$ to a standard Normal, and $p(x\vert z)$ to a Bernoulli parameterized by the decoder.

\textbf{Training details.} In all cases the encoder and decoder are initialized to parameters that maximize the ELBO. For IW we tune the encoder and decoder parameters (using the doubly-reparameterized estimator \cite{doublyrep}), and for UHA we tune the integration step-size $\epsilon$, damping coefficient $\eta$, bridging parameters $\beta$, momentum covariance $\Sigma$ (diagonal), and the decoder parameters. Following Caterini et al. \cite{caterini2018hamiltonian} we constrain $\epsilon \in (0, 0.05)$ to avoid unstable behavior of the leapfrog discretization. We use Adam with a step-size of $10^{-4}$ to train for $100$ epochs and use the validation set for early stopping. We repeated all simulations for three different random seeds. In all cases the standard deviation of the results was less than $0.1$ nats (not shown in tables).

All methods achieved better results using the architecture with one hidden layer. These results are shown in Tables \ref{table:resultsVAEELBO} and \ref{table:resultsVAENMLL}. The first one shows the ELBO on the test set achieved for different values of $K$, and the second one the log-likelihood on the test set estimated with AIS \cite{wu2016quantitative}. It can be observed that \UHA\ leads to higher ELBOs, higher log-likelihoods, and smaller variational gaps (difference between ELBO and log-likelihood) than IW for all datasets, with the difference between both methods' performance increasing for increasing $K$. Notably, for $K = 64$, the variational gap for \UHA\ becomes quite small, ranging from $0.8$ to $1.4$ nats depending on the dataset.

% Results for the architecture from Burda et al. \cite{IWVAE} (two hidden layers) are shown in Tables \ref{table:resultsVAEELBOA} and \ref{table:resultsVAENMLLA} (Appendix \ref{app:resultsIWAE}). Overall, results with this architecture are worse for both methods. For the emnist-letters and kmnist datasets both IW and \UHA\ achieve a lower log-likelihood and ELBO on the test set (with similar training ELBO, which may indicate overfitting). Despite this, we observe again that \UHA\ leads to higher ELBOs and smaller variational gaps for all values of $K$. For this architecture, for all datasets, the best test log-likelihood was achieved by \UHA\ with $K = 64$. However, for smaller $K$, IW sometimes yields models with higher log-likelihoods (despite lower ELBOs).

Results for the architecture from Burda et al. \cite{IWVAE} (two hidden layers) are shown in Tables \ref{table:resultsVAEELBOA} and \ref{table:resultsVAENMLLA} (Appendix \ref{app:resultsIWAE}). Again, we observe that \UHA\ consistently leads to higher ELBOs and the best test log-likelihood was consistently achieved by UHA with $K = 64$. However, for smaller $K$, IW sometimes had better log-likelihoods than UHA (despite worse ELBOs).

% ELBO AIS [7, 15, 31, 63] [-133.54779, -132.31235, -131.48083, -130.86096]
% ELBO IW [1, 8, 16, 32, 64] [-137.99034, -134.58394, -133.85295, -133.26155, -132.74828]

% ELBO AIS [7, 15, 31, 63] [-176.57231, -174.62634, -173.22827, -171.63672]
% ELBO IW [1, 8, 16, 32, 64] [-184.24092, -179.67766, -178.6613, -177.77759, -177.02754]

% ELBO AIS [7, 15, 31, 63] [-89.84434, -88.82424, -88.09347, -87.62214]
% ELBO IW [1, 8, 16, 32, 64] [-93.40421, -90.52832, -89.92981, -89.424805, -89.010414]

\renewcommand{\arraystretch}{1}
\begin{table}[ht]
  \caption{ELBO on the test set (higher is better). For $K = 1$ both methods reduce to plain VI.}
  \label{table:resultsVAEELBO}
  \centering
  \begin{tabular}{lllllll}
    \toprule
    & & $K = 1$ & $K = 8$ & $K = 16$ & $K = 32$ & $K = 64$ \\
    \midrule
    \multirow{2}{1cm}{mnist}  & UHA & $-93.4$ & $-89.8$ & $-88.8$ & $-88.1$ & $-87.6$ \\
                              & IW  & $-93.4$ & $-90.5$ & $-89.9$ & $-89.4$ & $-89.0$ \\
    \midrule
    \multirow{2}{1cm}{letters} & UHA & $-137.9$ & $-133.5$ & $-132.3$ & $-131.5$ & $-130.9$ \\
                               & IW  & $-137.9$ & $-134.6$ & $-133.9$ & $-133.2$ & $-132.7$ \\
    \midrule
    \multirow{2}{1cm}{kmnist}  & UHA & $-184.2$ & $-176.6$ & $-174.6$ & $-173.2$ & $-171.6$ \\
                               & IW  & $-184.2$ & $-179.7$ & $-178.7$ & $-177.8$ & $-177.0$ \\
    \bottomrule
  \end{tabular}
\end{table}

% LL AIS [7, 15, 31, 63] [-130.68892, -130.35402, -130.14122, -129.94914]
% LL IW [1, 8, 16, 32, 64] [-131.85773, -130.8955, -130.71538, -130.5681, -130.43288]

% LL AIS [7, 15, 31, 63] [-172.17339, -171.60742, -171.19075, -170.17998]
% LL IW [1, 8, 16, 32, 64] [-174.28265, -172.9789, -172.69804, -172.41594, -172.2162]

% LL AIS [7, 15, 31, 63] [-87.529884, -87.24827, -87.03542, -86.91007]
% LL IW [1, 8, 16, 32, 64] [-88.47396, -87.62502, -87.48456, -87.34197, -87.23049]

% \vspace{-0.5cm}
\begin{table}[ht]
  \caption{Log-likelihood on the test set (higher is better). This is estimated using AIS with under-damped HMC using $2000$ bridging densities, $1$ HMC iteration with $16$ leapfrog steps per bridging density, integration step-size $\epsilon = 0.06$, and damping coefficient $\eta = 0.8$.}
  \label{table:resultsVAENMLL}
  \centering
  \begin{tabular}{lllllll}
    \toprule
    & & $K = 1$ & $K = 8$ & $K = 16$ & $K = 32$ & $K = 64$ \\
    \midrule
    \multirow{2}{1cm}{mnist}  & UHA & $-88.5$ & $-87.5$ & $-87.2$ & $-87.0$ & $-86.9$ \\
                              & IW  & $-88.5$ & $-87.6$ & $-87.5$ & $-87.3$ & $-87.2$ \\
    \midrule
    \multirow{2}{1cm}{letters} & UHA & $-131.9$ & $-130.7$ & $-130.3$ & $-130.1$ & $-129.9$ \\
                               & IW  & $-131.9$ & $-130.9$ & $-130.7$ & $-130.6$ & $-130.4$ \\
    \midrule
    \multirow{2}{1cm}{kmnist}  & UHA & $-174.3$ & $-172.2$ & $-171.6$ & $-171.2$ & $-170.2$ \\
                               & IW  & $-174.3$ & $-173.0$ & $-172.6$ & $-172.4$ & $-172.2$ \\
    \bottomrule
  \end{tabular}
\end{table}

% \renewcommand{\arraystretch}{1.1}
% \begin{table}[ht]
%   \caption{ELBO achieved by tuning different subsets of parameters for $K = 64$. All subsets of parameters considered include the integration step-size $\epsilon$ and damping coefficient $\eta$. The subsets are ordered in terms of increasing performance (this ordering coincides for all four models).}
%   \label{table:tuningind}
%   \centering
%   \begin{tabular}{llllllll}
%     \toprule
%     \multirow{2}{1cm}{Model} & \multicolumn{7}{c}{Parameters tuned}\\
%     \cmidrule{2-8}
%     & \{$\epsilon, \eta$\} & $\{\epsilon(\beta), \eta\}$ & $\{\epsilon, \eta, \momcov\}$
%     & $\{\epsilon, \eta, \psi\}$ & $\{\epsilon, \eta, \beta\}$ & $\{\epsilon, \eta, q\}$ & All \\
%     \midrule
%     Brownian      & $-0.42$   & $-0.41$   & $-0.29$   & $-0.29$   & $-0.25$   & $-0.1$    & $0.23$ \\
%     Lorenz        & $-1160.9$ & $-1160.9$ & $-1160.8$ & $-1159.9$ & $-1159.5$ & $-1159.0$ & $-1158.9$ \\
%     LR (a1a)      & $-696.8$  & $-696.8$  & $-696.2$  & $-693.1$  & $-692.7$  & $-691.6$  & $-686.7$ \\
%     LR (madelon)  & $-2416.4$ & $-2416.4$ & $-2415.2$ & $-2414.0$ & $-2414.2$ & $-2411.2$ & $-2407.2$ \\
%     \bottomrule
%   \end{tabular}
% \end{table}

\section{Discussion}

Since \UHA\ yields a differentiable lower bound, one could tune other parameters not considered in this work. For instance, a different momentum distribution per bridging density could be used, that is, $\unnorm \pi_m(z, \rho) = \unnorm \pi_m(z) \MD_m(\rho)$. We believe additions such as this may yield further gains.
Also, our method can be used to get tight and differentiable upper bounds on $\log Z$ using the reversed AIS procedure described by Grosse et al. \cite{grosse2015sandwiching}.

Finally, removing accept-reject steps might sometimes lead to instabilities during optimization if the step-size $\epsilon$ becomes large. We observed this effect when training VAEs on some datasets for the larger values of $K$. We solved this by constraining the range of $\epsilon$ (previously done by Caterini et al. \cite{caterini2018hamiltonian}). While this simple solution works well, we believe that other approaches (e.g. regularization, automatic adaptation) could work even better. We leave the study of such alternatives for future work.

% In addition, our method could also be used to get tighter upper bounds on $\log Z$ using the reversed AIS procedure described by Grosse et al. \cite{grosse2015sandwiching}. Their method to get an upper bound is similar to regular AIS, but starts with a sample from the true target (instead of the initial approximation $q$). One could easily derive the "reversed" version of \UHA\ to produce powerful and differentiable upper bounds.

\begin{ack}
This material is based upon work supported in part by the National Science Foundation under Grant No. 1908577.
% Use unnumbered first level headings for the acknowledgments. All acknowledgments
% go at the end of the paper before the list of references. Moreover, you are required to declare
% funding (financial activities supporting the submitted work) and competing interests (related financial activities outside the submitted work).
% More information about this disclosure can be found at: \url{https://neurips.cc/Conferences/2021/PaperInformation/FundingDisclosure}.
% Do {\bf not} include this section in the anonymized submission, only in the final paper. You can use the \texttt{ack} environment provided in the style file to autmoatically hide this section in the anonymized submission.
\end{ack}

% \jd{In the references, make sure any arxiv papers have their "real" publication. I think this is true for [11] }

% \clearpage
\bibliography{control}

\begin{thebibliography}{10}

\bibitem{TFDS}
{TensorFlow Datasets}, a collection of ready-to-use datasets.
\newblock \url{https://www.tensorflow.org/datasets}.

\bibitem{agakov2004auxiliary}
Felix~V Agakov and David Barber.
\newblock An auxiliary variational method.
\newblock In {\em International Conference on Neural Information Processing},
  pages 561--566. Springer, 2004.

\bibitem{betancourt2017geometric}
Michael Betancourt, Simon Byrne, Sam Livingstone, Mark Girolami, et~al.
\newblock The geometric foundations of hamiltonian monte carlo.
\newblock {\em Bernoulli}, 23(4A):2257--2298, 2017.

\bibitem{blei2017variational}
David~M Blei, Alp Kucukelbir, and Jon~D McAuliffe.
\newblock Variational inference: A review for statisticians.
\newblock {\em Journal of the American Statistical Association},
  112(518):859--877, 2017.

\bibitem{jax2018github}
James Bradbury, Roy Frostig, Peter Hawkins, Matthew~James Johnson, Chris Leary,
  Dougal Maclaurin, George Necula, Adam Paszke, Jake Vander{P}las, Skye
  Wanderman-{M}ilne, and Qiao Zhang.
\newblock {JAX}: composable transformations of {P}ython+{N}um{P}y programs,
  2018.

\bibitem{IWVAE}
Yuri Burda, Roger Grosse, and Ruslan Salakhutdinov.
\newblock Importance weighted autoencoders.
\newblock In {\em Proceedings of the International Conference on Learning
  Representations}, 2016.

\bibitem{caterini2018hamiltonian}
Anthony~L Caterini, Arnaud Doucet, and Dino Sejdinovic.
\newblock Hamiltonian variational auto-encoder.
\newblock In {\em Advances in Neural Information Processing Systems}, 2018.

\bibitem{chen2014stochastic}
Tianqi Chen, Emily Fox, and Carlos Guestrin.
\newblock Stochastic gradient hamiltonian monte carlo.
\newblock In {\em International conference on machine learning}, pages
  1683--1691. PMLR, 2014.

\bibitem{cheng2018underdamped}
Xiang Cheng, Niladri~S Chatterji, Peter~L Bartlett, and Michael~I Jordan.
\newblock Underdamped langevin mcmc: A non-asymptotic analysis.
\newblock In {\em Conference on Learning Theory}, pages 300--323. PMLR, 2018.

\bibitem{clanuwat2018deep}
Tarin Clanuwat, Mikel Bober-Irizar, Asanobu Kitamoto, Alex Lamb, Kazuaki
  Yamamoto, and David Ha.
\newblock Deep learning for classical japanese literature.
\newblock {\em arXiv preprint arXiv:1812.01718}, 2018.

\bibitem{cohen2017emnist}
Gregory Cohen, Saeed Afshar, Jonathan Tapson, and Andre Van~Schaik.
\newblock Emnist: Extending mnist to handwritten letters.
\newblock In {\em 2017 International Joint Conference on Neural Networks
  (IJCNN)}, pages 2921--2926. IEEE, 2017.

\bibitem{cover1999elements}
Thomas~M Cover.
\newblock {\em Elements of information theory}.
\newblock John Wiley \& Sons, 1999.

\bibitem{cremer2017reinterpreting}
Chris Cremer, Quaid Morris, and David Duvenaud.
\newblock Reinterpreting importance-weighted autoencoders.
\newblock {\em arXiv preprint arXiv:1704.02916}, 2017.

\bibitem{ding2019vaeAIS}
Xinqiang Ding and David~J Freedman.
\newblock Learning deep generative models with annealed importance sampling.
\newblock {\em arXiv preprint arXiv:1906.04904}, 2019.

\bibitem{domke2018importance}
Justin Domke and Daniel Sheldon.
\newblock Importance weighting and variational inference.
\newblock In {\em Advances in Neural Information Processing Systems}, 2018.

\bibitem{domke2019divide}
Justin Domke and Daniel Sheldon.
\newblock Divide and couple: Using monte carlo variational objectives for
  posterior approximation.
\newblock In {\em Advances in Neural Information Processing Systems}, 2019.

\bibitem{grosse2016measuring}
Roger~B Grosse, Siddharth Ancha, and Daniel~M Roy.
\newblock Measuring the reliability of mcmc inference with bidirectional monte
  carlo.
\newblock In {\em Advances in Neural Information Processing Systems}, 2016.

\bibitem{grosse2015sandwiching}
Roger~B Grosse, Zoubin Ghahramani, and Ryan~P Adams.
\newblock Sandwiching the marginal likelihood using bidirectional monte carlo.
\newblock {\em arXiv preprint arXiv:1511.02543}, 2015.

\bibitem{hoffman2017MCMCvae}
Matthew~D Hoffman.
\newblock Learning deep latent gaussian models with markov chain monte carlo.
\newblock In {\em International conference on machine learning}, pages
  1510--1519. PMLR, 2017.

\bibitem{hoffman2014no}
Matthew~D Hoffman, Andrew Gelman, et~al.
\newblock The no-u-turn sampler: adaptively setting path lengths in hamiltonian
  monte carlo.
\newblock {\em J. Mach. Learn. Res.}, 15(1):1593--1623, 2014.

\bibitem{huang2018improving}
Chin-Wei Huang, Shawn Tan, Alexandre Lacoste, and Aaron Courville.
\newblock Improving explorability in variational inference with annealed
  variational objectives.
\newblock In {\em Advances in Neural Information Processing Systems}, 2018.

\bibitem{jarzynski1997equilibrium}
Christopher Jarzynski.
\newblock Equilibrium free-energy differences from nonequilibrium measurements:
  A master-equation approach.
\newblock {\em Physical Review E}, 56(5):5018, 1997.

\bibitem{adam}
Diederik~P Kingma and Jimmy Ba.
\newblock Adam: A method for stochastic optimization.
\newblock In {\em Proceedings of the International Conference on Learning
  Representations}, 2015.

\bibitem{vaes_welling}
Diederik~P Kingma and Max Welling.
\newblock Auto-encoding variational bayes.
\newblock In {\em Proceedings of the International Conference on Learning
  Representations}, 2013.

\bibitem{lecun1998gradient}
Yann LeCun, L{\'e}on Bottou, Yoshua Bengio, and Patrick Haffner.
\newblock Gradient-based learning applied to document recognition.
\newblock {\em Proceedings of the IEEE}, 86(11):2278--2324, 1998.

\bibitem{li2017approximate}
Yingzhen Li, Richard~E Turner, and Qiang Liu.
\newblock Approximate inference with amortised mcmc.
\newblock {\em arXiv preprint arXiv:1702.08343}, 2017.

\bibitem{neal2001ais}
Radford~M Neal.
\newblock Annealed importance sampling.
\newblock {\em Statistics and computing}, 11(2):125--139, 2001.

\bibitem{neal2005hamiltonianis}
Radford~M Neal.
\newblock Hamiltonian importance sampling.
\newblock In {\em talk presented at the Banff International Research Station
  (BIRS) workshop on Mathematical Issues in Molecular Dynamics}, 2005.

\bibitem{neal2011mcmc}
Radford~M Neal et~al.
\newblock Mcmc using hamiltonian dynamics.
\newblock {\em Handbook of markov chain monte carlo}, 2(11):2, 2011.

\bibitem{rezende2015variational}
Danilo~Jimenez Rezende and Shakir Mohamed.
\newblock Variational inference with normalizing flows.
\newblock In {\em Proceedings of the 32nd International Conference on Machine
  Learning (ICML-15)}, 2015.

\bibitem{rezende2014stochastic}
Danilo~Jimenez Rezende, Shakir Mohamed, and Daan Wierstra.
\newblock Stochastic backpropagation and approximate inference in deep
  generative models.
\newblock In {\em Proceedings of the 31st International Conference on Machine
  Learning (ICML-14)}, pages 1278--1286, 2014.

\bibitem{ruiz2019contrastive}
Francisco Ruiz and Michalis Titsias.
\newblock A contrastive divergence for combining variational inference and
  mcmc.
\newblock In {\em International Conference on Machine Learning}, pages
  5537--5545. PMLR, 2019.

\bibitem{salakhutdinov2008quantitative}
Ruslan Salakhutdinov and Iain Murray.
\newblock On the quantitative analysis of deep belief networks.
\newblock In {\em Proceedings of the 25th international conference on Machine
  learning}, pages 872--879, 2008.

\bibitem{salimans2015markov}
Tim Salimans, Diederik Kingma, and Max Welling.
\newblock Markov chain monte carlo and variational inference: Bridging the gap.
\newblock In {\em International Conference on Machine Learning}, pages
  1218--1226, 2015.

\bibitem{sohl2012hamiltonian}
Jascha Sohl-Dickstein and Benjamin~J Culpepper.
\newblock Hamiltonian annealed importance sampling for partition function
  estimation.
\newblock {\em arXiv preprint arXiv:1205.1925}, 2012.

\bibitem{inferencegym2020}
Pavel Sountsov, Alexey Radul, and contributors.
\newblock Inference gym, 2020.

\bibitem{tabak2013family}
Esteban~G Tabak and Cristina~V Turner.
\newblock A family of nonparametric density estimation algorithms.
\newblock {\em Communications on Pure and Applied Mathematics}, 66(2):145--164,
  2013.

\bibitem{thin2021monte}
Achille Thin, Nikita Kotelevskii, Arnaud Doucet, Alain Durmus, Eric Moulines,
  and Maxim Panov.
\newblock Monte carlo variational auto-encoders.
\newblock In {\em International Conference on Machine Learning}, pages
  10247--10257. PMLR, 2021.

\bibitem{doublystochastic_titsias}
Michalis Titsias and Miguel L{\'a}zaro-Gredilla.
\newblock Doubly stochastic variational bayes for non-conjugate inference.
\newblock In {\em Proceedings of the 31st International Conference on Machine
  Learning (ICML-14)}, pages 1971--1979, 2014.

\bibitem{doublyrep}
George Tucker, Dieterich Lawson, Shixiang Gu, and Chris~J Maddison.
\newblock Doubly reparameterized gradient estimators for monte carlo
  objectives.
\newblock In {\em Proceedings of the International Conference on Learning
  Representations}, 2019.

\bibitem{wainwright2008graphical}
Martin~J Wainwright, Michael~I Jordan, et~al.
\newblock Graphical models, exponential families, and variational inference.
\newblock {\em Foundations and Trends{\textregistered} in Machine Learning},
  1(1--2):1--305, 2008.

\bibitem{welling2011bayesian}
Max Welling and Yee~W Teh.
\newblock Bayesian learning via stochastic gradient langevin dynamics.
\newblock In {\em Proceedings of the 28th international conference on machine
  learning (ICML-11)}, pages 681--688. Citeseer, 2011.

\bibitem{wolf2016variational}
Christopher Wolf, Maximilian Karl, and Patrick van~der Smagt.
\newblock Variational inference with hamiltonian monte carlo.
\newblock {\em arXiv preprint arXiv:1609.08203}, 2016.

\bibitem{wu2016quantitative}
Yuhuai Wu, Yuri Burda, Ruslan Salakhutdinov, and Roger Grosse.
\newblock On the quantitative analysis of decoder-based generative models.
\newblock In {\em Proceedings of the International Conference on Learning
  Representations}, 2017.

\bibitem{zhang2017advances}
Cheng Zhang, Judith Butepage, Hedvig Kjellstrom, and Stephan Mandt.
\newblock Advances in variational inference.
\newblock {\em arXiv preprint arXiv:1711.05597}, 2017.

\bibitem{zhang2021differentiable}
Guodong Zhang, Kyle Hsu, Jianing Li, Chelsea Finn, and Roger Grosse.
\newblock Differentiable annealed importance sampling and the perils of
  gradient noise.
\newblock {\em arXiv preprint arXiv:2107.10211}, 2021.

\end{thebibliography}
\bibliographystyle{plain}
\clearpage
\newpage

%!TEX root = ../main.tex

\appendix

\section{Generating the Hamiltonian AIS bound} \label{app:HAISboundalg}

\begin{algorithm}[H]
\caption{Generating the (non-differentiable) Hamiltonian AIS variational bound.}
\label{alg:sampleais}
\begin{algorithmic}
\State Sample $z_1 \sim q$ and $\rho_1 \sim \MD$.
\State Initialize estimator as $\mathcal{L} \leftarrow -\log q(z_1, \rho_1)$.
\For{$m = 1, 2, \cdots , M-1$}
	\State Run corrected $T_m$ (Alg.~\ref{alg:correctedtm}) on input $(z_m, \rho_m)$, storing the output $(z_{m+1}, \rho_{m+1})$.
	\State Update estimator as $\mathcal{L} \leftarrow \mathcal{L} + \log \left( \unnorm \pi_m(z_m, \rho_m) / \unnorm \pi_m(z_{m+1}, \rho_{m+1}) \right)$.
\EndFor
\State Update estimator as $\mathcal{L} \leftarrow \mathcal{L} + \log \unnorm p(z_M, \rho_M)$.
\State \Return $\mathcal{L}$
\end{algorithmic}
\end{algorithm}

\section{More results tuning more subsets of parameters for \UHA} \label{app:tunemoreuha}

We tested \UHA\ tuning different subsets of $\{\epsilon, \eta, \momcov, \beta, q(z), \epsilon(\beta), \psi(\beta)\}$. Fig.~\ref{fig:tuning_more} shows the results. The first row shows the results obtained by tuning the pair $(\epsilon, \eta)$ and each other parameter individually for different values of $K$, and the second row shows the results obtained by tuning increasingly more parameters. It can be observed that tuning $\beta$ and $q(z)$ lead to the largest gains in performance.

\begin{figure}[ht]
  \centering
  \includegraphics[scale=0.4, trim = {0 1.5cm 0 0}, clip]{./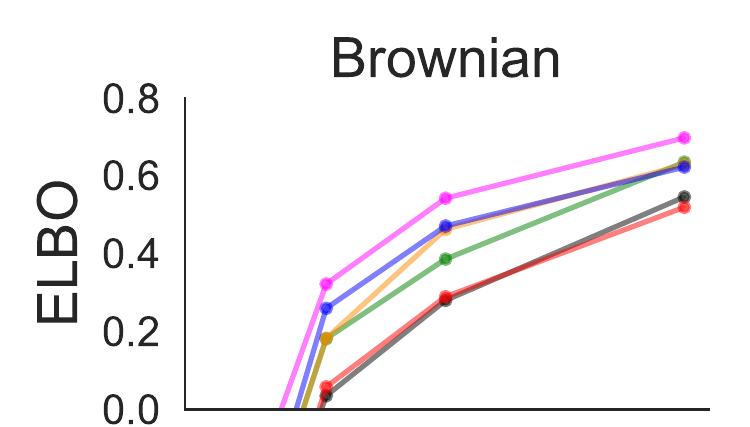}\hfill
  \includegraphics[scale=0.4, trim = {1.1cm 1.5cm 0 0}, clip]{./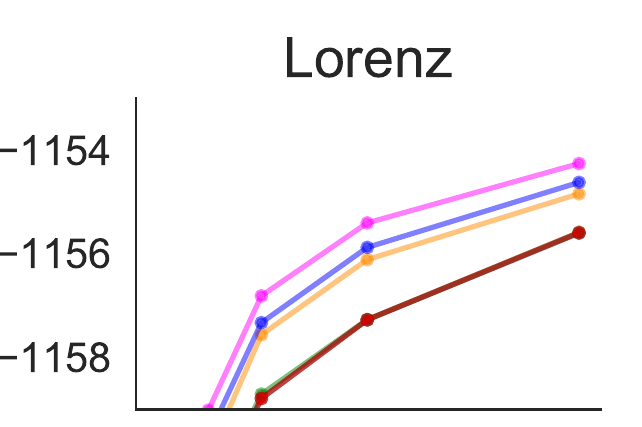}\hfill
  \includegraphics[scale=0.4, trim = {1.3cm 1.5cm 0 0}, clip]{./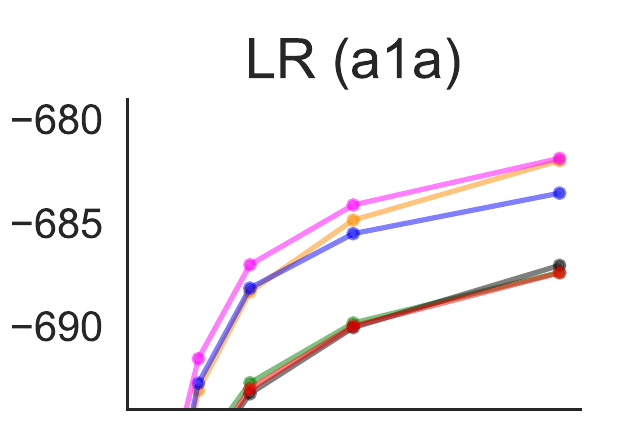}\hfill
  \includegraphics[scale=0.4, trim = {1.3cm 1.5cm 0 0}, clip]{./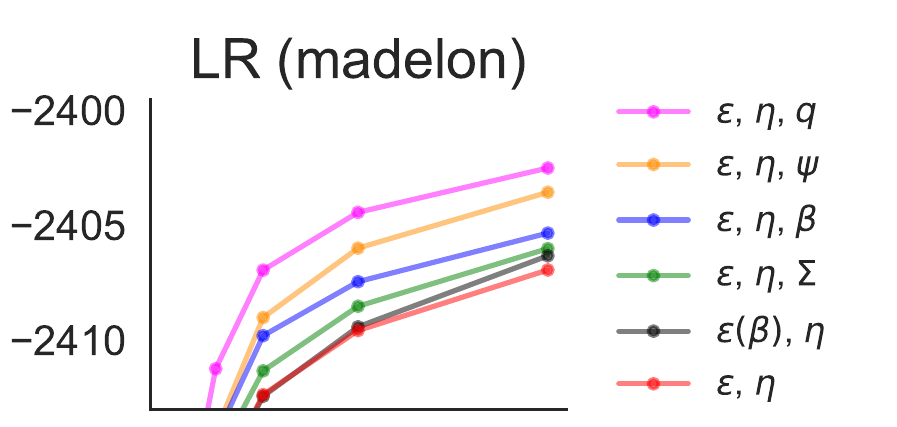}
  \hspace{0.8cm}

  \vspace{0.2cm}

  \includegraphics[scale=0.4, trim = {0 0 0 0.86cm}, clip]{./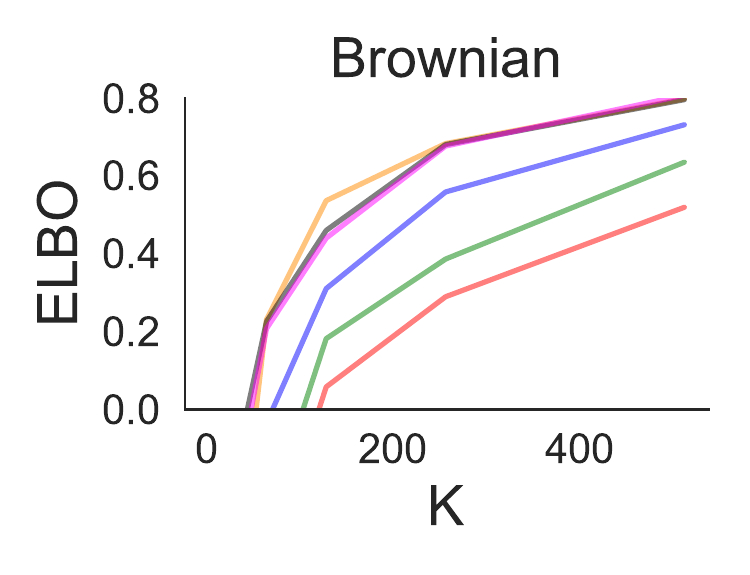}\hfill
  \includegraphics[scale=0.4, trim = {1.1cm 0 0 0.87cm}, clip]{./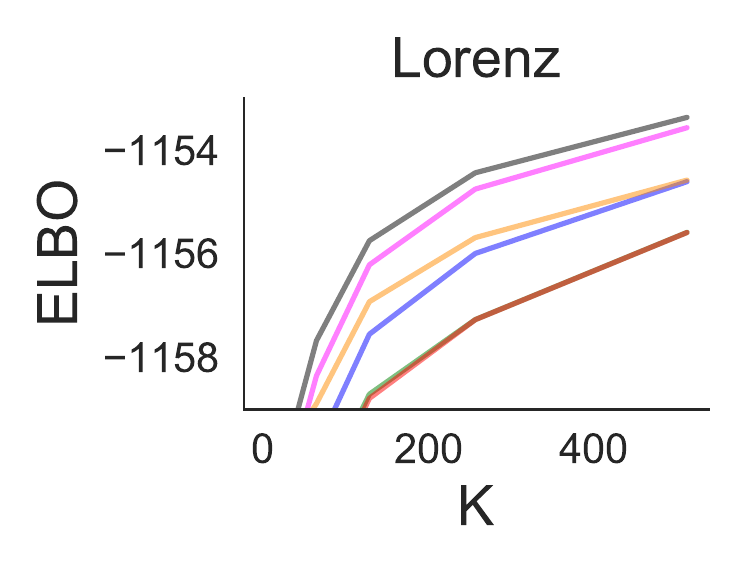}\hfill
  \includegraphics[scale=0.4, trim = {1.3cm 0 0 0.9cm}, clip]{./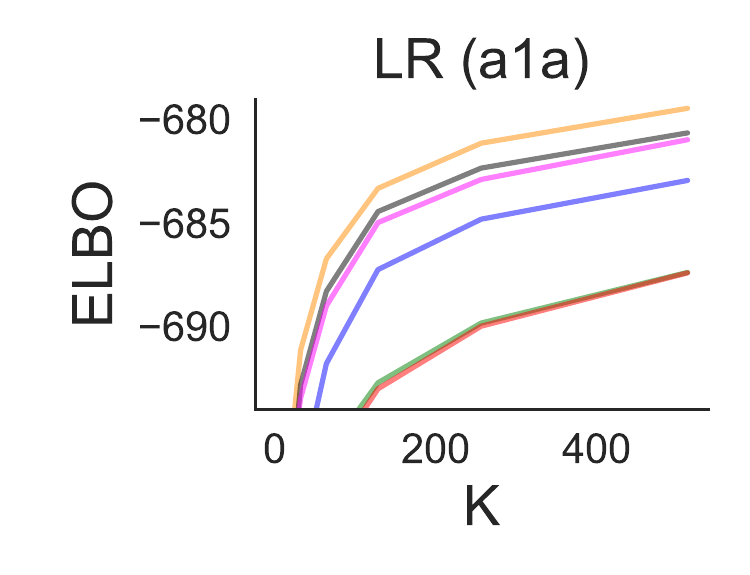}\hfill
  \includegraphics[scale=0.4, trim = {1.3cm 0 0 0.9cm}, clip]{./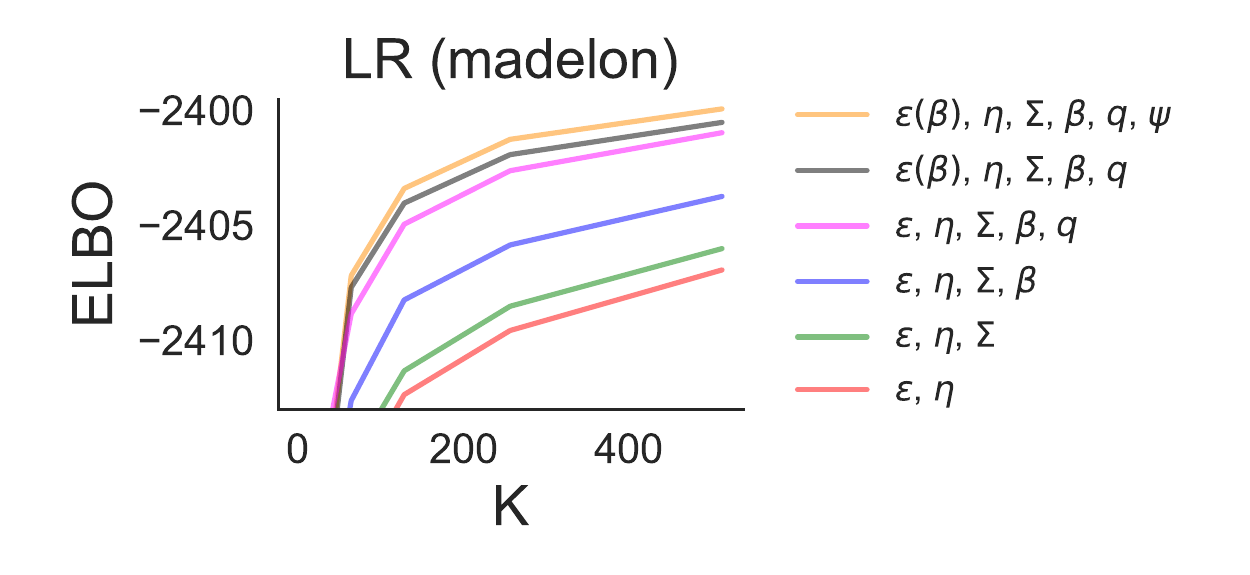}
  \caption{\textbf{Tuning more parameters leads to significantly better results.} Legends indicate what parameters are being trained. Parameters are step-size $\epsilon$, damping coefficient $\eta$, moment covariance $\Sigma$, bridging densities parameters $\beta$ and $\psi$, initial distribution $q$. $\epsilon(\beta)$ indicates we are learning the step-size as an affine function of $\beta$.}
  \label{fig:tuning_more}
\end{figure}

\section{Results using architecture from Burda et al. \cite{IWVAE}} \label{app:resultsIWAE}

In this section we show the results achieved for VAE training using the architecture from Burda et al. \cite{IWVAE} (with $1$ stochastic layer). In this case the encoder and decoder consist on feed forward neural networks with two hidden layers of size $200$ with \textit{Tanh} non-linearity, and latent space dimensionality of $50$. All training details are the same, but with the constraint $\epsilon \in (0, 0.04)$. Tables \ref{table:resultsVAEELBOA} and \ref{table:resultsVAENMLLA} show the results.

% ELBO AIS [7, 15, 31, 63] [-134.31573, -133.27618, -132.56648, -131.16951]
% ELBO IW [1, 8, 16, 32, 64] [-139.03172, -135.54303, -134.72406, -134.0166, -133.42055]

% ELBO AIS [7, 15, 31, 63] [-189.48364, -188.0871, -187.088, -180.28703]
% ELBO IW [1, 8, 16, 32, 64] [-197.45915, -191.81526, -190.19653, -188.8296, -187.60118]

% ELBO AIS [7, 15, 31, 63] [-89.17246, -88.53774, -88.07794, -87.11738]
% ELBO IW [1, 8, 16, 32, 64] [-92.36892, -89.88202, -89.31567, -88.84015, -88.459656]

\renewcommand{\arraystretch}{1}
\begin{table}[ht]
  \caption{ELBO on the test set (higher is better). For $K = 1$ both methods reduce to plain VI.}
  \label{table:resultsVAEELBOA}
  \centering
  \begin{tabular}{lllllll}
    \toprule
    & & $K = 1$ & $K = 8$ & $K = 16$ & $K = 32$ & $K = 64$ \\
    \midrule
    \multirow{2}{1cm}{mnist}  & UHA & $-92.4$ & $-89.2$ & $-88.5$ & $-88.1$ & $-87.1$ \\
                              & IW  & $-92.4$ & $-89.9$ & $-89.3$ & $-88.8$ & $-88.5$ \\
    \midrule
    \multirow{2}{1cm}{letters} & UHA & $-139.0$ & $-134.3$ & $-133.3$ & $-132.6$ & $-131.2$ \\
                               & IW  & $-139.0$ & $-135.5$ & $-134.7$ & $-134.0$ & $-133.4$ \\
    \midrule
    \multirow{2}{1cm}{kmnist}  & UHA & $-197.5$ & $-189.5$ & $-188.1$ & $-187.1$ & $-180.3$ \\
                               & IW  & $-197.5$ & $-191.8$ & $-190.2$ & $-188.8$ & $-187.6$ \\
    \bottomrule
  \end{tabular}
\end{table}

% LL AIS [7, 15, 31, 63] [-131.78796, -131.44426, -131.23482, -129.83711]
% LL IW [1, 8, 16, 32, 64] [-133.03381, -131.6323, -131.23698, -130.88538, -130.6211]

% LL AIS [7, 15, 31, 63] [-186.28233, -185.76295, -185.3224, -177.36072]
% LL IW [1, 8, 16, 32, 64] [-188.27483, -184.3666, -183.16008, -182.13933, -181.21844]

% LL AIS [7, 15, 31, 63] [-87.58126, -87.39932, -87.26233, -86.2587]
% LL IW [1, 8, 16, 32, 64] [-88.2803, -87.30457, -87.007744, -86.77047, -86.59281]

\begin{table}[ht]
  \caption{Log-likelihood on the test set (higher is better). This is estimated using AIS with under-damped HMC using $2000$ bridging densities, $1$ HMC iteration with $16$ leapfrog steps per bridging density, integration step-size $\epsilon = 0.05$, and damping coefficient $\eta = 0.8$.}
  \label{table:resultsVAENMLLA}
  \centering
  \begin{tabular}{lllllll}
    \toprule
    & & $K = 1$ & $K = 8$ & $K = 16$ & $K = 32$ & $K = 64$ \\
    \midrule
    \multirow{2}{1cm}{mnist}  & UHA & $-88.3$ & $-87.6$ & $-87.4$ & $-87.3$ & $-86.3$ \\
                              & IW  & $-88.3$ & $-87.3$ & $-87.0$ & $-86.8$ & $-86.6$ \\
    \midrule
    \multirow{2}{1cm}{letters} & UHA & $-133.0$ & $-131.8$ & $-131.4$ & $-131.2$ & $-129.9$ \\
                               & IW  & $-133.0$ & $-131.6$ & $-131.2$ & $-130.9$ & $-130.6$ \\
    \midrule
    \multirow{2}{1cm}{kmnist}  & UHA & $-188.3$ & $-186.3$ & $-185.8$ & $-185.3$ & $-177.4$ \\
                               & IW  & $-188.3$ & $-184.4$ & $-183.2$ & $-182.1$ & $-181.2$ \\
    \bottomrule
  \end{tabular}
\end{table}

\section{Extrapolating optimal parameters for \UHA} \label{sec:extrapol}

Some results in Section \ref{sec:tuningmore} (and Appendix \ref{app:tunemoreuha}) use a number of bridging densities $K$ up to 512. As mentioned previously, for those simulations, if $K_1 \geq 64$ bridging densities were used, we optimized the parameters for $K_2 = 64$ and extrapolate the parameters to work with $K_1$. We now explain this procedure.

From the parameters considered, $\{\epsilon, \eta, \momcov, \beta, q(z), \epsilon(\beta), \psi(\beta)\}$, the only ones that need to be "extrapolated" are the step-size $\epsilon$ and the bridging parameters $\beta$. All other parameters are tuned for $K_2 = 64$ bridging densities and the values obtained are directly used with $K_1$ bridging densities. 

For $\beta$ we use a simple interpolation. Define $f(x)$ to be the piecewise linear function (with $K_2 = 64$ "pieces") that satisfies $f(x_k) = \beta_k$, for $x_k = k / K_2$ and $k = 0, \cdots, K_2$ (this is a bijection from $[0, 1]$ to $[0, 1]$). Then, when using $K_1$, we simply define $\beta_k = f(x_k)$, where $x_k = k / K_1$ and $k = 0, \cdots, K_1$.

For $\epsilon$, we use the transformation $\epsilon_{K_1} = \epsilon_{K_2} \frac{\log K_2}{\log K_1}$. While other transformations could be used (e.g. without the $\log$), we observed this to work best in practice. (In fact, we obtained this rule by analyzing the dependence of the optimal $\epsilon$ on $K$ for several tasks and values of $K$.)

\section{Approximation accuracy} \label{app:comparehmc}

We study the accuracy of the approximation provided by UHA by analyzing the posterior moment errors: We estimate the mean and covariance of the target distribution using UHA and compute the mean absolute error of these estimates. (We get the ground truth values using approximate samples obtained running NUTS \cite{hoffman2014no} for $500000$ steps.) We consider a logistic regression model with the \textit{sonar} dataset ($d = 61$), and compare against mean field VI, IW, and HMC. We give each method the same computational budget $B$, measured as the total number of model evaluations (or gradient), and perform simulations for $B\in \{10^5, 5\times 10^5, 10^6\}$.

For HMC, we use half of the budget for the warm-up phase and half to draw samples. For mean field VI we use the whole budget for optimization, and use the final mean and variance parameters for the approximation. For \UHA\ and IW we train using $K=32$ for $3000$ steps, and use the remaining budget of model evaluations to draw samples (used to estimate posterior moments) using $K=256$.\footnote{For \UHA\ we use the extrapolation explained in Appendix \ref{sec:extrapol}} For \UHA\ we tune the step-size $\epsilon$, the damping coefficient $\eta$, the momentum distribution covariance (diagonal), the bridging densities coefficients $\beta$, and the parameters of the initial distribution $q(z)$.

Fig. \ref{fig:posterior_approx} shows the results for the posterior covariance. We do not include the results for the posterior mean because all methods perform similarly. It can be observed that HMC achieves the lowest error, followed by \UHA. Both mean field VI and IW yield significantly worse results.

\begin{figure}[ht]
  \centering
  \includegraphics[scale=0.4, trim = {0 0 0 0}, clip]{./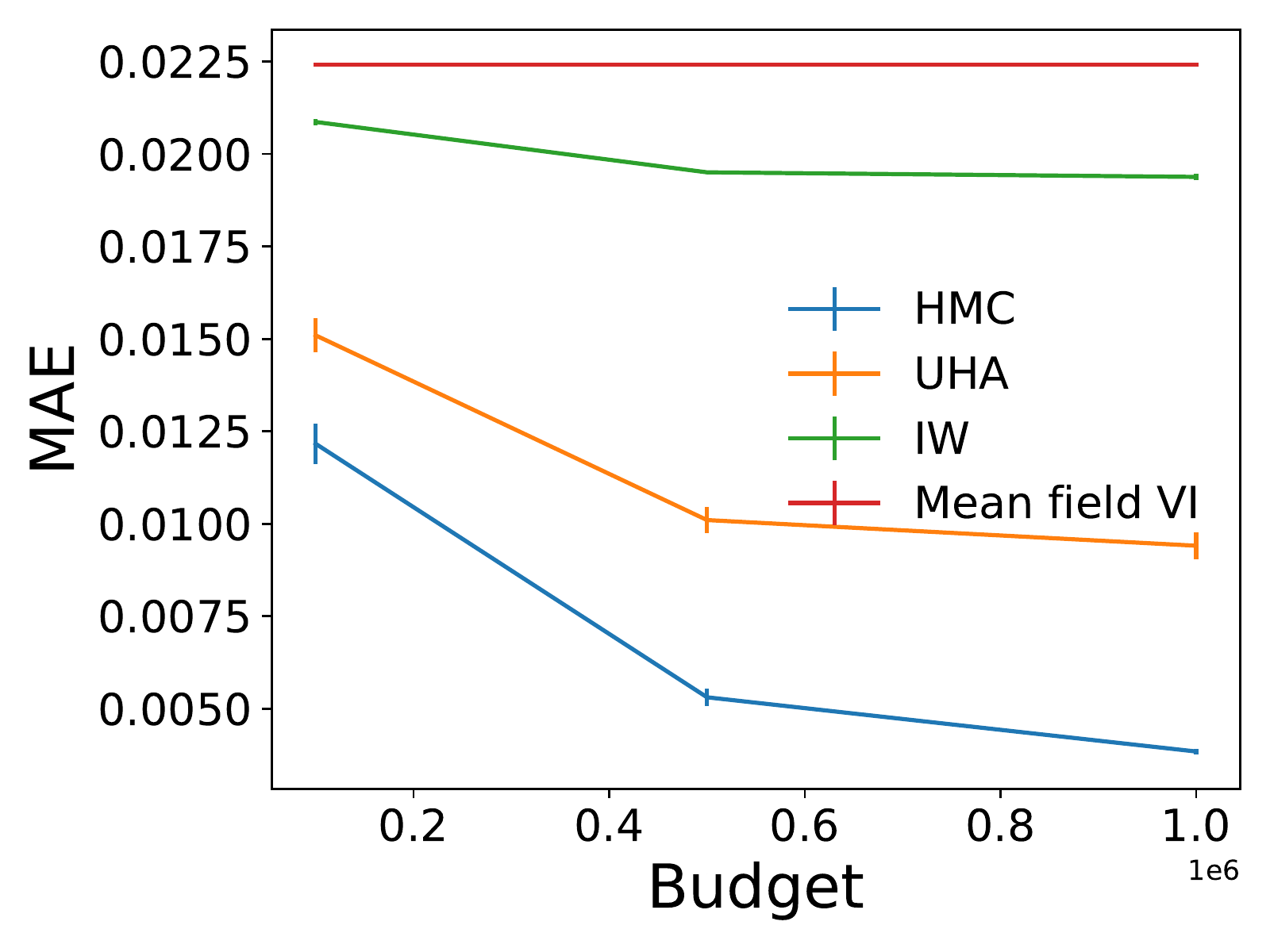}
  \caption{Mean absolute error for posterior covariance approximation. Standard errors computed by repeating the simulations using five different random seeds.}
  \label{fig:posterior_approx}
\end{figure}

\section{Proof of Lemma \ref{lemma:reversal}} \label{app:proofrev}

We begin with the following result.

\begin{lemma}
Let $T_1$, $T_2$ and $T_3$ be three transitions that leave some distribution $\pi$ invariant and satisfy $T_i(z'\vert z) \pi(z) = U_i(z\vert z') \pi(z')$ (i.e. $U_i$ is the reversal of $T_i$ with respect to $\pi$). Then the reversal of $T$ with respect to $\pi$ is given by $U = U_3 \circ U_2 \circ U_1$.
\end{lemma}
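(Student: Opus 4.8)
The plan is to reduce the three-transition claim to a single two-transition building block and then apply that block twice. First I would fix conventions: $U$ is the reversal of $T$ with respect to $\pi$ exactly when the (detailed-balance-like) identity $\pi(z)\, T(z'\vert z) = \pi(z')\, U(z\vert z')$ holds for all $z,z'$, and composition of kernels is $(A\circ B)(z''\vert z) = \int A(z''\vert z')\, B(z'\vert z)\, dz'$, so that $B$ acts first and $A$ second. The invariance of $\pi$ under each $T_i$ is what guarantees that the corresponding reversal $U_i$ is a genuine normalized transition density, so I would note this once and then work purely with the reversal identities.

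The core step is to show that if $V_A$ and $V_B$ are the reversals of $A$ and $B$ (both with respect to $\pi$), then $V_B \circ V_A$ is the reversal of $A \circ B$. I would prove this by direct substitution. Starting from $\pi(z)\,(A\circ B)(z''\vert z) = \int \pi(z)\, A(z''\vert z')\, B(z'\vert z)\, dz'$, I first rewrite the factor $\pi(z)\, B(z'\vert z)$ as $\pi(z')\, V_B(z\vert z')$ using the reversal relation for $B$, and then rewrite $\pi(z')\, A(z''\vert z')$ as $\pi(z'')\, V_A(z'\vert z'')$ using the reversal relation for $A$. Pulling $\pi(z'')$ out of the integral leaves $\pi(z'')\int V_A(z'\vert z'')\, V_B(z\vert z')\, dz' = \pi(z'')\,(V_B\circ V_A)(z\vert z'')$, which is exactly the reversal identity for $A\circ B$. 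Note that the order of the two reversals is swapped relative to the order of the original kernels.

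Finally I would apply this building block twice. Writing $T = T_1 \circ (T_2 \circ T_3)$, the building block with $A = T_2$, $B = T_3$ gives that the reversal of $T_2\circ T_3$ is $U_3 \circ U_2$; applying it again with $A = T_1$ and $B = T_2\circ T_3$ gives that the reversal of $T$ is $(U_3 \circ U_2)\circ U_1 = U_3 \circ U_2 \circ U_1$, as claimed. (One could equivalently phrase this as a short induction on the number of composed transitions, with the two-kernel result as the inductive step.)

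The main thing to get right — and really the only obstacle — is the bookkeeping of composition order and the integration variables: because reversal reverses the order of the factors, I must keep careful track of which kernel acts first and apply the two reversal relations to the correct factors in the correct sequence. Once the substitution order is pinned down the argument is purely mechanical, and nothing beyond the pairwise reversal identities (plus invariance, used only to ensure the $U_i$ are valid kernels) is needed.
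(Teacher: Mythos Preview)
Your proposal is correct and is essentially the same argument as the paper's: both push $\pi$ through the integrand one kernel at a time using the reversal identities, ending with the reversed order. The only cosmetic difference is that the paper carries out the three substitutions in a single chain of equalities on the triple integral, whereas you package the idea as a two-kernel building block and apply it twice.
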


\begin{proof}
\begin{align}
T(z'\vert z) \pi(z) & = \int T_3(z' \vert z_2) T_2(z_2 \vert z_1) T_1(z_1 \vert z) \pi(z) \,dz_1\,dz_2\\
& = \int T_3(z' \vert z_2) T_2(z_2 \vert z_1) \pi(z_1 ) U_1(z \vert z_1) \,dz_1\,dz_2\\
& = \int T_3(z' \vert z_2) \pi(z_2) U_2(z_1 \vert z_2) U_1(z \vert z_1) \,dz_1\,dz_2\\
& = \pi(z') \int U_3(z_2 \vert z') U_2(z_1 \vert z_2) U_1(z \vert z_1) \,dz_1\,dz_2\\
& = \pi(z') \int U_1(z \vert z_1) U_2(z_1 \vert z_2) U_3(z_2 \vert z') \,dz_1\,dz_2\\
& = \pi(z') U(z \vert z').
\end{align}
\end{proof}

The rest of the proof is straightforward. Let the three steps from the corrected version of $T_m$ (Alg.~\ref{alg:correctedtm}) be denoted $T_m^1$, $T_m^2$ and $T_m^3$. The latter two (Hamiltonian simulation with accept-reject step and momentum negation) satisfy detailed balance with respect to $\pi_m(z, \rho)$ \cite[\S3.2]{neal2011mcmc}. Thus, for these two, $U_m^i$ is defined by the same process as $T_m^i$. For $T_m^1$ (momentum resampling), its reversal is given by the reversal of $s(\rho'\vert \rho)$ with respect to $\MD(\rho)$. We call this $s_{\mathrm{rev}}(\rho\vert \rho')$, and it satisfies
\begin{equation}
s_{\mathrm{rev}}(\rho\vert \rho') = s(\rho'\vert \rho) \frac{\MD(\rho)}{\MD(\rho')}.
\end{equation}

\section{Proof of Theorem \ref{thm:ratioUT}} \label{app:proofthm}

To deal with delta functions, whenever the transition states [Set $z' \leftarrow z$], we use $z' \sim \mathcal{N}(z, a)$, and take the limit $a \to 0$. We use $g_a(z)$ to denote the density of a Gaussian with mean zero and variance $a$ evaluated at $z$, and $\gamma(z, \rho) = (z, -\rho)$ (operator that negates momentum).

We first compute $T_m(z_{m+1}, \rho_{m+1} \vert z_m, \rho_m)$. We have that $\rho'_m \sim s(\cdot\vert \rho_m)$ and $z'_m \sim \mathcal{N}(z_m, a)$. Thus, \begin{equation}
T_m'(z_{m}', \rho_{m}' \vert z_m, \rho_m) = s(\rho_m'\vert \rho_m) g_a(z'_m - z_m).
\end{equation}

Also, we have $(z_{m+1}, \rho_{m+1}) = (\gamma \circ \mathcal{T}_m)(z'_m, \rho'_m)$. Since $\gamma \circ \mathcal{T}_m$ is an invertible transformation with unit Jacobian and inverse $(\gamma \circ \mathcal{T}_m)^{-1} = \mathcal{T}_m \circ \gamma$, we get that
\begin{align}
T_m(z_{m+1}, \rho_{m+1} \vert z_m, \rho_m) & = T_m'((\mathcal{T}_m \circ \gamma) \, (z_{m+1}, \rho_{m+1}) \vert z_m, \rho_m)\\
& = s(\mathcal{T}_m^\rho(z_{m+1}, -\rho_{m+1})\vert \rho_m) \, g_a(\mathcal{T}_m^z(z_{m+1}, -\rho_{m+1}) - z_m),
\end{align}
where $\mathcal{T}_m^\rho$ is the operator that applies $\mathcal{T}_m$ and returns the second component of the result (and similarly for $\mathcal{T}_m^z$).

Now, we compute $U_m(z_m, \rho_m \vert z_{m+1}, \rho_{m+1})$. We have that $(z'_m, \rho'_m) = (\mathcal{T}_m \circ \gamma) \, (z_{m+1}, \rho_{m+1})$. Thus,
\begin{align}
U_m(z_m, \rho_m \vert z_{m+1}, \rho_{m+1}) & = U_m(z_m, \rho_m \vert z_{m}', \rho_{m}')\\
& = s_\mathrm{rev}(\rho_m \vert \rho_m') \, g_a(z_m - z_m')\\
& = s_\mathrm{rev}(\rho_m \vert \mathcal{T}_m^\rho(z_{m+1}, -\rho_{m+1}) \, g_a(z_m - \mathcal{T}_m^z(z_{m+1}, -\rho_{m+1})).
\end{align}

Taking the ratio $U_m(z_m, \rho_m \vert z_{m+1}, \rho_{m+1}) / T_m(z_{m+1}, \rho_{m+1} \vert z_m, \rho_m)$ the factors involving the Gaussian pdf cancel (the density of a mean zero Gaussian is symmetric) and using that
\begin{equation}
s_\mathrm{rev}(\rho_m\vert \rho'_m) \MD(\rho'_m) = s(\rho'_m\vert \rho_m) \MD(\rho_m) \longrightarrow
\frac{s_\mathrm{rev}(\rho_m\vert \rho'_m)}{s(\rho'_m\vert \rho_m)} = \frac{\MD(\rho_m)}{\MD(\rho'_m)}
\end{equation}
yields get the desired result.

\end{document}